\let\MYcaption\@makecaption
\let\@makecaption\MYcaption
  \renewcommand*\env@matrix[1][*\c@MaxMatrixCols c]{%
    \hskip -\arraycolsep
    \let\@ifnextchar\new@ifnextchar
  \array{#1}}
\newcommand\centerofmass{%
    \tikz[radius=0.4em] {%
        \fill (0,0) -- ++(0.4em,0) arc [start angle=0,end angle=90] -- ++(0,-0.8em) arc [start angle=270, end angle=180];%
        \draw (0,0) circle;%
    }%
}
\newtheorem{theorem}{Theorem}
\newtheorem{lemma}{Lemma}
\newtheorem{remark}{Remark}
\newtheorem{definition}{Definition}
\begin{document}

\title{
~\\  
Formal Connections between Template and Anchor Models via Approximate Simulation}

\author{Vince~Kurtz, Rafael~Rodrigues~da~Silva, Patrick~M.~Wensing, and~Hai~Lin%
\thanks{The partial support of the National Science Foundation (Grant No. ECCS-1253488, IIS-1724070, CNS-1830335, CMMI-1835186) is gratefully acknowledged.}%
\thanks{V. Kurtz, R. Silva, and H. Lin are with the Department of Electrical Engineering, University of Notre Dame, Notre Dame, IN, 46556 USA. \texttt{\{vkurtz,rrodri17,hlin1\}@nd.edu}.}%
\thanks{P.M. Wensing is with the Department of Aerospace and Mechanical Engineering, University of Notre Dame, Notre Dame, IN, 46556 USA. \texttt{pwensing@nd.edu}.}
}

\maketitle

\IEEEpeerreviewmaketitle

\begin{abstract}
    Reduced-order template models like the Linear Inverted Pendulum (LIP) and Spring-Loaded Inverted Pendulum (SLIP) are widely used tools for controlling high-dimensional humanoid robots. However, connections between templates and whole-body models have lacked formal underpinnings, preventing formal guarantees when it comes to integrated controller design. We take a small step towards addressing this gap by considering the notion of \textit{approximate simulation}. Derived from simulation relations for discrete transition systems in formal methods, approximate similarity means that the outputs of two systems can remain $\epsilon$-close. In this paper, we consider the case of controlling a balancer via planning with the LIP model. We show that the balancer approximately simulates the LIP and derive linear constraints that are sufficient conditions for maintaining ground contact. This allows for rapid planning and replanning with the template model by solving a quadratic program that enforces contact constraints in the full model. We demonstrate the efficacy of this planning and control paradigm in a simulated push recovery scenario for a planar 4-link balancer.
\end{abstract}

\section{Introduction}

Template models like the Linear Inverted Pendulum (LIP), Spring-Loaded Inverted Pendulum (SLIP), and Compass-Gait Walker are widely used for analysis, planning, and control of locomotion and balance \cite{wieber2016modeling,Geyer2018,kajita20013d,chen2012analysis,wensing2013high,posa2017balancing}. These models capture important properties of the full system, or anchor model, while also being simple enough to enable efficient planning. Often, the template model is used to generate a plan via Model Predictive Control (MPC) (e.g., \cite{kajita2003biped,wieber2006trajectory}) while the anchor model tracks this nominal trajectory, subject to contact constraints. The template-anchor control paradigm works well in practice, and is widely accepted in the robotics community~\cite{wieber2016modeling}. Furthermore, it has been hypothesized that biological systems use templates as sensorimotor control targets: animals may regulate their center-of-mass (CoM) to follow a lower-order model \cite{full1999templates}. 

However, a key weakness of the template-anchor paradigm is a lack of formal correctness. So far, formal connections between template and anchor models have not been identified. Beyond the insights that might come from identifying such connections, it is also difficult to provide provably correct whole-body controllers that track the template model while also accounting for constraints on contact forces, joint limits, joint torques, etc. 

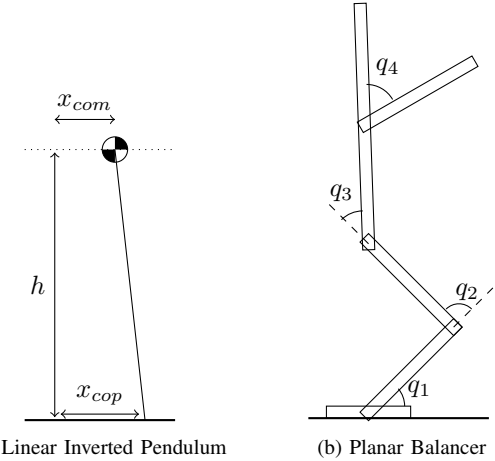
\begin{figure}
    \centering
    \begin{subfigure}[b]{0.45\linewidth}
        \centering
        \begin{tikzpicture}[scale=0.8]
            \draw[thick] (-0.5,0) -- (2,0);
            \draw[dotted] (-0.5,4.5) -- (2,4.5);
            \node at (1,4.5) (com) {\large{\centerofmass}};
            \draw (1,4.5) -- (1.5,0);
            
            \draw[<->] (0,0.05) -- node[left] {$h$} (0,4.45);
            \draw[<->] (0.1,0.1) -- node[above] {$x_{cop}$} (1.4,0.1);
            \draw[<->] (0,5.0) -- node[above] {$x_{com}$} (1,5.0);
        \end{tikzpicture}
        \caption{Linear Inverted Pendulum}
        \label{fig:lip}
    \end{subfigure}
    \begin{subfigure}[b]{0.45\linewidth}
        \centering
        \begin{tikzpicture}[scale=0.8]
            \tikzmath{
                \l = 2; 
                \w = 0.2; 
            }
            \draw[thick] (-1.0,-0.1) -- (2,-0.1);
            \draw (-0.7,-0.1) rectangle (0.7,0.1);
            \draw[rotate around={45:(0,0)}] (0-\w/2,0-\w/2) rectangle (\l+\w/2,0+\w/2);
            \draw[rotate around={135:({sqrt(\l)},{sqrt(\l)})}] ({sqrt(\l)-\w/2},{sqrt(\l)-\w/2}) rectangle ({\l+sqrt(\l)+\w/2},{sqrt(\l)+\w/2});
            \draw[rotate around={2:(0,2.8)}] (-0.1,2.7) rectangle (0.1,6.8);
            \draw[rotate around={-60:(-0.0,4.7)}] (-0.2,4.6) rectangle (0,6.8);
            \draw[dashed] ({sqrt(\l)},{sqrt(\l)}) -- ({sqrt(\l)+0.7},{sqrt(\l)+0.7});
            \draw[dashed] (0,2.8) -- (-0.7,3.5);
            \draw (0.6,0.1) arc (0:40:0.45) node[right] {$q_1$} ;
            \draw ({sqrt(\l)+0.3},{sqrt(\l)+0.3}) arc (45:135:0.3) node[above right] {$q_2$};
            \draw (-0.45,3.2) arc (135:90:0.5) node[above left] {$q_3$};
            \draw (0.4,5.2) arc (30:90:0.5) node [above right] {$q_4$};
        \end{tikzpicture}
        \caption{Planar Balancer}
        \label{fig:balancer}
    \end{subfigure}
    \caption{Template (linear inverted pendulum) and anchor (balancer in single support) models considered in this paper. We show that the balancer approximately simulates the LIP, which allows us to bound the tracking error between the two models.}
    \label{fig:models}
\end{figure}

The state-of-the-art in this regard is to formulate the tracking problem as a Quadratic Program (QP), which partial-feedback linearizes the system dynamics while also encoding contact, friction, and torque constraints \cite{herzog2014balancing,escande2014hierarchical,wensing2013generation}. Such QPs can be solved at kHz rates for high-dimensional rigid-body models, and have been applied to great effect in the control of humanoid robots \cite{kuindersma2016optimization,feng2015optimization,dai2014whole}. But this approach does not admit formal guarantees: we cannot ensure that the solver will always find a solution, or that the optimal solution will cause the anchor to converge to the template trajectory (though this property holds empirically under normal conditions with appropriately tuned controllers).

An alternative, more formal, approach is to restrict the anchor model to operate on a lower-dimensional manifold that is diffeomorphic to the template dynamics. This approach was taken by Poulakakis and Grizzle \cite{poulakakis2009spring} in the context of an asymmetric hopper wherein the SLIP model was embedded as the Hybrid Zero Dynamics \cite{westervelt2003hybrid} of the monopod. This approach admits strong formal guarantees, but finding a whole-body controller that enforces this diffeomorphic relationship is difficult, especially for high-dimensional systems like humanoids. Furthermore, it is challenging to repurpose redundant degrees of freedom for other tasks (e.g., holding a cup of coffee while walking) under this approach.

In this work, we present an alternative method of providing formal guarantees for control with template models. Specifically, we consider the paradigm of \textit{approximate simulation} \cite{girard2011approximate}. This concept is a generalization of simulation relations from formal methods \cite{baier2008principles} to continuous systems. If the anchor approximately simulates the template, then there exists a controller which tracks any trajectory of the template model with $\epsilon$ precision. Such a controller is known as an \textit{interface}. For any approximate simulation relation, there is also a \textit{simulation function}, a Lyapunov-like function that bounds the error between the two models. 

The contributions of this paper are as follows. First, we show that classical whole-body controllers based on task-space feedback linearization function as an interface, which proves that a balancer in single support approximately simulates a LIP. However, this simulation relation ignores important constraints that exist in whole-body robot models (e.g., contact constraints, joint limits, torque limits, etc.). As a second contribution, we introduce a new MPC scheme that uses the simulation relationship to address contact wrench cone (CWC) \cite{pang2000stability} constraints during planning with the LIP. Inspired by centroidal momentum planning methods \cite{dai2016planning,audren20183}, we address non-convexity in this MPC scheme by introducing a convex inner approximation to the CWC constraints. This inner approximation allows us to perform MPC for the LIP model by solving a QP that already accounts for contact constraints in the whole-body model, as shown in Figure \ref{fig:control_flow}. In contrast to existing work \cite{caron2016multi}, this new method does not require an admissible region of the CoM to be specified a-priori.  

The remainder of this paper is organized as follows: Section \ref{sec:background} presents a formal definition of approximate simulation and introduces relevant results for linear systems. Section~\ref{sec:problem_form} provides system definitions and a problem statement. Section~\ref{sec:results} presents our principal results, which are illustrated with a simulated push-recovery scenario in Section~\ref{sec:example}. We provide a brief discussion in Section~\ref{sec:discuss}, and conclude with Section~\ref{sec:conclusion}.

\section{Background}\label{sec:background}

\subsection{Approximate Simulation}\label{subsec:approximate_simulation}
Approximate simulation is formally defined in terms of two systems $\Sigma_1$ and $\Sigma_2$:
\begin{equation}
    \Sigma_1 :
    \begin{cases}
        \dot{\mathbf{x}}_1 = f_1(\mathbf{x}_1,\mathbf{u}_1) \\
        \mathbf{y}_1 = g_2(\mathbf{x}_1)
    \end{cases},~~~
    \Sigma_2 :
    \begin{cases}
        \dot{\mathbf{x}}_2 = f_2(\mathbf{x}_2,\mathbf{u}_2) \\
        \mathbf{y}_2 = g_2(\mathbf{x}_2)
    \end{cases},
\end{equation}
where $\mathbf{x}_i \in \mathbb{R}^{n_i}$ are the system states, $\mathbf{u}_i \in \mathbb{R}^{p_i}$ are the control inputs, and $\mathbf{y}_i \in \mathbb{R}^m$ are the system outputs. Note that the states may be different sizes but the outputs---which in our case correspond to the task-space---must be the same size. Without loss of generality, we consider $\Sigma_1$ to be the whole-body (anchor) model and $\Sigma_2$ to be the reduced-order (template) model. 

Approximate simulation for continuous systems \cite{girard2011approximate} is defined in terms of a Lyapunov-like simulation function $\mathcal{V}$ and an interface function $u_\mathcal{V}$:
\begin{definition}
    Let $\mathcal{V} : \mathbb{R}^{n_2} \times \mathbb{R}^{n_1} \to \mathbb{R}^+$ be a smooth function and $u_\mathcal{V} : \mathbb{R}^{p_2} \times \mathbb{R}^{n_1} \times \mathbb{R}^{n_2} \to \mathbb{R}^{p_1}$ be a continuous function. $\mathcal{V}$ is a simulation function of $\Sigma_2$ by $\Sigma_1$ and $u_\mathcal{V}$ is an associated interface if there exists a class-$\kappa$ function\footnote{A function $\gamma : \mathbb{R}^+ \to \mathbb{R}^+$ is a class-$\kappa$ function if it is continuous, strictly increasing, and $\gamma(0)=0$.} $\gamma$ such that for all $\mathbf{x}_1, \mathbf{x}_2 \in \mathbb{R}^{n_1} \times \mathbb{R}^{n_2}$,
    \begin{equation}\label{eq:output_error_bounded}
        \mathcal{V}(\mathbf{x}_1,\mathbf{x}_2) \geq \|g_1(\mathbf{x}_1)-g_2(\mathbf{x}_2)\|
    \end{equation}
    and for all $\mathbf{u}_2 \in \mathbb{R}^{p_2}$ satisfying $\gamma(\|\mathbf{u}_2\|) < \mathcal{V}(\mathbf{x}_1, \mathbf{x}_2)$,
    \begin{equation}\label{eq:simulation_fcn_decay}
        \frac{\partial\mathcal{V}}{\partial\mathbf{x}_2}f_2(\mathbf{x}_2, \mathbf{u}_2) +\frac{\partial\mathcal{V}}{\partial\mathbf{x}_1}f_1(\mathbf{x}_1, u_{\mathcal{V}}(\mathbf{u}_2, \mathbf{x}_1, \mathbf{x}_2)) < 0.
    \end{equation}
\end{definition}
These conditions essentially state that when the interface is applied, the simulation function always bounds the output error (\ref{eq:output_error_bounded}) and decreases as long as $\mathbf{u}_2$ is not too large (\ref{eq:simulation_fcn_decay}). 
\begin{definition}[\cite{girard2011approximate}]
    $\Sigma_1$ approximately simulates $\Sigma_2$ if and only if there exists a simulation function $\mathcal{V}$ of $\Sigma_2$ by $\Sigma_1$.
\end{definition}

\begin{figure}
    \centering
    \begin{tikzpicture}
        \node[minimum width=2cm, minimum height=0.9cm] (planner) at (0,5) {Cost Function};
        \node[minimum width=2cm] (constraints) at (1.5,6) {Contact Constraints};
        \node[draw, minimum width=2cm, right=2 of planner, minimum height=0.7cm] (template) {Template MPC};
        \node[right=0.3 of template] (frequency) {\textit{QP @20Hz}};
        \node[draw, minimum width=2cm, minimum height=0.7cm, below=0.7 of template] (interface) {Interface};
        \node[draw, minimum width=2cm, minimum height=0.7cm, below=0.7 of interface] (linearization) {Feedback Linearization};
        \node[draw, minimum width=2cm, minimum height=0.7cm, below=0.7 of linearization] (robot) {Robot};
        
        \draw[->,thick,dashed] (planner) -- (template);
        \draw[->,thick,dashed] (constraints) |- ([yshift=0.25cm]template.west);
        \draw[->,thick] (template) --  node[right] {$\mathbf{x}_{lip},u_{lip}$} (interface);
        \draw[->,thick] (interface) --  node[right] {$\mathbf{u}_{task}$} (linearization);
        \draw[->,thick] (linearization) -- node[right] {$\bm{\tau}$} (robot);
        
        \draw[->,thick] (robot) --  ++(-3,0) -- node[left] {$\mathbf{q},\mathbf{\dot{q}}$} ++(0,4.0) -- ([yshift=-0.25cm]template.west);
        \draw[<-,thick] (interface.west) -- ++(-2,0);
        \draw[<-,thick] (linearization.west) -- ++(-1.2,0);
    \end{tikzpicture}
    \caption{Control flow for our approach. While most whole-body controllers account for contact constraints in the feedback linearization phase with a QP, we project contact constraints back to the template model using the approximate simulation relation. Note that $(\mathbf{q},\dot{\mathbf{q}})$ are used only to compute the task state $\mathbf{x}_{task}$ in the Template MPC block.}
    \label{fig:control_flow}
\end{figure}
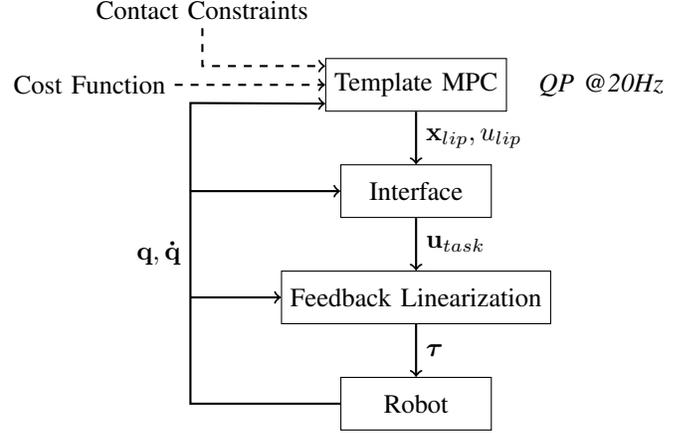

If $\Sigma_1$ approximately simulates $\Sigma_2$, we can use the simulation function to bound the output error of the two systems by $\epsilon$:
\begin{theorem}[\cite{girard2009hierarchical}]\label{theorem:bound}
    Let $\mathcal{V}$ be a simulation function of $\Sigma_2$ by $\Sigma_1$ and $u_\mathcal{V}$ be an associated interface. Let $\mathbf{u}_2(t)$ be an admissible input of $\Sigma_2$ with associated state and output trajectories $\mathbf{x}_2(t)$ and $\mathbf{y}_2(t)$. Let $\mathbf{x}_1(t)$ be a state trajectory of $\Sigma_1$ satisfying
    \begin{equation*}
        \dot{\mathbf{x}}_1 = f_1(\mathbf{x}_1, u_\mathcal{V}(\mathbf{u}_2, \mathbf{x}_1, \mathbf{x}_2))
    \end{equation*}
    and $\mathbf{y}_1(t)$ be the associated output trajectory. Then 
    \begin{equation}
        \|\mathbf{y}_1(t)-\mathbf{y}_2(t)\| \leq \epsilon
    \end{equation}
    where
    \begin{equation}\label{eq:epsilon}
        \epsilon = \max\big\{ \mathcal{V}(\mathbf{x}_1(0),\mathbf{x}_2(0)), \gamma(\|\mathbf{u}_2\|_\infty)\big\}.
    \end{equation}
\end{theorem}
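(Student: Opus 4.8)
The plan is to reduce the claim to a forward-invariance statement about the sublevel sets of the simulation function. Inequality (\ref{eq:output_error_bounded}) already gives $\|\mathbf{y}_1(t)-\mathbf{y}_2(t)\| = \|g_1(\mathbf{x}_1(t))-g_2(\mathbf{x}_2(t))\| \leq \mathcal{V}(\mathbf{x}_1(t),\mathbf{x}_2(t))$ pointwise in time, so it suffices to prove that $\mathcal{V}$ never exceeds $\epsilon$ along the interconnected trajectory. Equivalently, I would show that the sublevel set $\{\mathcal{V}\leq\epsilon\}$ is invariant under the closed-loop dynamics obtained by feeding the interface $u_\mathcal{V}(\mathbf{u}_2,\mathbf{x}_1,\mathbf{x}_2)$ into $\Sigma_1$ and the given admissible input $\mathbf{u}_2$ into $\Sigma_2$.

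Concretely, define the scalar signal $V(t) := \mathcal{V}(\mathbf{x}_1(t),\mathbf{x}_2(t))$. Because $\mathcal{V}$ is smooth and the two state trajectories are absolutely continuous solutions of their respective ODEs under admissible inputs, $V$ is absolutely continuous and its time derivative equals the chain-rule expression appearing in (\ref{eq:simulation_fcn_decay}) for almost every $t$. Two facts seed the argument: first, $V(0) = \mathcal{V}(\mathbf{x}_1(0),\mathbf{x}_2(0)) \leq \epsilon$ by the definition of $\epsilon$ in (\ref{eq:epsilon}); second, since $\gamma$ is class-$\kappa$ and therefore increasing, $\gamma(\|\mathbf{u}_2(t)\|)\leq\gamma(\|\mathbf{u}_2\|_\infty)\leq\epsilon$ for all $t$.

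I would then argue by contradiction. Suppose $V(t_1)>\epsilon$ for some $t_1>0$. Since $V$ is continuous and $V(0)\leq\epsilon$, there is a last time $t_0\in[0,t_1)$ with $V(t_0)=\epsilon$ and $V(t)>\epsilon$ for all $t\in(t_0,t_1]$. On this interval $\gamma(\|\mathbf{u}_2(t)\|)\leq\epsilon<V(t)$, so the hypothesis of the decay condition (\ref{eq:simulation_fcn_decay}) is met and $\dot V(t)<0$ for almost every $t\in(t_0,t_1]$. Integrating yields $V(t_1)=V(t_0)+\int_{t_0}^{t_1}\dot V(s)\,ds < V(t_0)=\epsilon$, contradicting $V(t_1)>\epsilon$. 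Hence $V(t)\leq\epsilon$ for all $t\geq0$, and combining this with (\ref{eq:output_error_bounded}) gives $\|\mathbf{y}_1(t)-\mathbf{y}_2(t)\|\leq\epsilon$ as claimed.

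The main obstacle is the careful handling of the boundary of the sublevel set in combination with the strict inequality in (\ref{eq:simulation_fcn_decay}). The definition only promises decay strictly inside the region $\{\gamma(\|\mathbf{u}_2\|)<\mathcal{V}\}$, so when $\epsilon=\gamma(\|\mathbf{u}_2\|_\infty)$ the trajectory may sit exactly on the level $\mathcal{V}=\epsilon$, where no decrease is guaranteed. Selecting the \emph{last} crossing time $t_0$ and invoking the decay condition only on the open set where $V$ strictly exceeds $\epsilon$ sidesteps this gap. The remaining technical care is simply establishing that $V$ is absolutely continuous so that the fundamental theorem of calculus applies, which follows from the smoothness of $\mathcal{V}$ and the well-posedness of the trajectories under admissible inputs.
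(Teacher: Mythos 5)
Your argument is correct: the paper itself gives no proof of this theorem (it is imported from the cited reference \cite{girard2009hierarchical}), and your sublevel-set invariance argument---bounding $\|\mathbf{y}_1-\mathbf{y}_2\|$ by $\mathcal{V}$ via (\ref{eq:output_error_bounded}) and using the last-crossing-time device to handle the boundary case $\epsilon=\gamma(\|\mathbf{u}_2\|_\infty)$ where the strict decay condition (\ref{eq:simulation_fcn_decay}) is not guaranteed---is essentially the standard proof given there. No gaps.
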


Finding a simulation function for two arbitrary dynamical systems is a difficult and open problem, though some promising results with sum-of-squares programming exist \cite{girard2005approximate,murthy2015computing}. For linear systems, however, there are well-defined conditions for the existence of a simulation function \cite{girard2009hierarchical}. We summarize these conditions in the following subsection.

\subsection{Approximate Simulation for Linear Systems}\label{subsec:linear_simulation}

Consider the case when both the template and the anchor are linear systems, i.e.,
\begin{equation}
    \Sigma_i :
    \begin{cases}
        \dot{\mathbf{x}}_i = \mathbf{A}_i\mathbf{x}_i + \mathbf{B}_i\mathbf{u}_i \\
        \mathbf{y}_i = \mathbf{C}_i\mathbf{x}_i
    \end{cases}, ~~~ i = \{1,2\}.
\end{equation}
In this case, there are strong results regarding whether a simulation relation exists. First note the following Lemma:
\begin{lemma}[\citet{girard2007approximate}]\label{lemma:M}
    If the anchor system $\Sigma_1$ is stabilizable with feedback gain $\mathbf{K}$, i.e., $(\mathbf{A}_1 + \mathbf{B}_1\mathbf{K})$ is Hurwitz, then there exists a positive definite symmetric matrix $\mathbf{M}$ and positive scalar constant $\lambda$ such that the following hold:
    \begin{gather}
        \mathbf{M} \geq \mathbf{C}_1^T\mathbf{C}_1, \\
        (\mathbf{A}_1+\mathbf{B}_1\mathbf{K})^T\mathbf{M} + \mathbf{M}(\mathbf{A}_1+\mathbf{B}_1\mathbf{K}) \leq -2 \lambda \mathbf{M}.
    \end{gather}
\end{lemma}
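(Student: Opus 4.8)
The plan is to combine the standard Lyapunov theorem for Hurwitz matrices with a scaling argument that reconciles the two matrix inequalities. Throughout, write $\mathbf{A}_{cl} = \mathbf{A}_1 + \mathbf{B}_1\mathbf{K}$ for the closed-loop system matrix, which is Hurwitz by the stabilizability assumption.

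First I would fix the decay rate $\lambda$. Because $\mathbf{A}_{cl}$ is Hurwitz, each of its eigenvalues $\mu_i$ has $\mathrm{Re}(\mu_i) < 0$; the eigenvalues of the shifted matrix $\mathbf{A}_{cl} + \lambda\mathbf{I}$ are exactly $\mu_i + \lambda$, so any $\lambda > 0$ with $\lambda < \min_i\{-\mathrm{Re}(\mu_i)\}$ keeps $\mathbf{A}_{cl} + \lambda\mathbf{I}$ Hurwitz. I would pin down such a $\lambda$ and hold it fixed for the remainder of the argument.

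Next I would apply the standard Lyapunov theorem for Hurwitz matrices to $\mathbf{A}_{cl} + \lambda\mathbf{I}$: the equation $(\mathbf{A}_{cl} + \lambda\mathbf{I})^T\mathbf{M}_0 + \mathbf{M}_0(\mathbf{A}_{cl} + \lambda\mathbf{I}) = -\mathbf{I}$ then admits a unique symmetric positive definite solution $\mathbf{M}_0$. Expanding the left-hand side and moving the $2\lambda\mathbf{M}_0$ term across yields $\mathbf{A}_{cl}^T\mathbf{M}_0 + \mathbf{M}_0\mathbf{A}_{cl} = -\mathbf{I} - 2\lambda\mathbf{M}_0 \leq -2\lambda\mathbf{M}_0$, which is precisely the second required inequality (with $\mathbf{M}_0$ in place of $\mathbf{M}$).

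The only genuinely delicate step is then enforcing the first condition $\mathbf{M} \geq \mathbf{C}_1^T\mathbf{C}_1$, since the $\mathbf{M}_0$ produced above bears no a priori relation to $\mathbf{C}_1$. The resolution is that the decay inequality is homogeneous in $\mathbf{M}$: for every scalar $c > 0$, the matrix $c\mathbf{M}_0$ also satisfies $\mathbf{A}_{cl}^T(c\mathbf{M}_0) + (c\mathbf{M}_0)\mathbf{A}_{cl} \leq -2\lambda(c\mathbf{M}_0)$. I would therefore set $\mathbf{M} = c\mathbf{M}_0$ and choose $c$ large enough to dominate $\mathbf{C}_1^T\mathbf{C}_1$. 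Since $\mathbf{M}_0 \succ 0$ has $\lambda_{\min}(\mathbf{M}_0) > 0$, the choice $c \geq \|\mathbf{C}_1\|^2 / \lambda_{\min}(\mathbf{M}_0)$ gives $c\mathbf{M}_0 \geq c\,\lambda_{\min}(\mathbf{M}_0)\mathbf{I} \geq \|\mathbf{C}_1\|^2\mathbf{I} \geq \mathbf{C}_1^T\mathbf{C}_1$, so $\mathbf{M} = c\mathbf{M}_0$ satisfies both inequalities and is the desired matrix. I expect no further obstacles beyond verifying these routine positive-definiteness bounds.
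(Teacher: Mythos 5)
The paper states this lemma as a cited result from Girard et al.\ and gives no proof of its own, so there is nothing to compare against; on its merits, your argument is correct and is the standard one: shift $\mathbf{A}_{cl}$ by $\lambda\mathbf{I}$ while keeping it Hurwitz, solve the Lyapunov equation for $\mathbf{M}_0\succ 0$, observe that the decay inequality is homogeneous in $\mathbf{M}$, and rescale by $c\geq\|\mathbf{C}_1\|^2/\lambda_{\min}(\mathbf{M}_0)$ to dominate $\mathbf{C}_1^T\mathbf{C}_1$. All steps check out, including the bounds $c\mathbf{M}_0\geq c\,\lambda_{\min}(\mathbf{M}_0)\mathbf{I}$ and $\mathbf{C}_1^T\mathbf{C}_1\leq\|\mathbf{C}_1\|^2\mathbf{I}$.
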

Such an $\mathbf{M}$ can be used to show exponential convergence of $\mathbf{y}_1$ to zero with rate $\lambda$ under the feedback $\mathbf{u}_1 = \mathbf{K}\mathbf{x}_1$. Note that these conditions are linear matrix inequalities in $\mathbf{M}$. This means that given $\mathbf{K}$ and $\lambda$, $\mathbf{M}$ can be computed using semidefinite programming.

We can now state the following Theorem:

\begin{theorem}[\citet{girard2009hierarchical}]\label{theorem:simulation}
    Assume that $\Sigma_1$ is stabilizable with feedback gain $\mathbf{K}$ and that there exist matrices $\mathbf{P}$ and $\mathbf{Q}$ such that the following conditions hold:
    \begin{align}
        \mathbf{P}\mathbf{A}_2 &= \mathbf{A}_1\mathbf{P} + \mathbf{B}_1\mathbf{Q}, \\
        \mathbf{C}_2 & = \mathbf{C}_1\mathbf{P}.
    \end{align}
    Then a simulation function of $\Sigma_2$ by $\Sigma_1$ is given by
    \begin{equation}
        \mathcal{V}(\mathbf{x}_1,\mathbf{x}_2) = \sqrt{ (\mathbf{x}_1-\mathbf{P}\mathbf{x}_2)^T\mathbf{M}(\mathbf{x}_1-\mathbf{P}\mathbf{x}_2}),
    \end{equation}
    an associated interface is
    \begin{equation}
        u_{\mathcal{V}} = \mathbf{R}\mathbf{u}_2 + \mathbf{Q}\mathbf{x}_2 + \mathbf{K}(\mathbf{x}_1 - \mathbf{P}\mathbf{x}_2),
    \end{equation}
    and the class-$\kappa$ function $\gamma$ is given by
    \begin{equation}\label{eq:gamma_fcn}
        \gamma(\nu) = \frac{\|\sqrt{\mathbf{M}}(\mathbf{B}_1\mathbf{R}-\mathbf{P}\mathbf{B}_2)\|}{\lambda}\nu,
    \end{equation}
    where $\mathbf{R}$ is an arbitrary matrix of proper dimensions. 
\end{theorem}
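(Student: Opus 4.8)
The plan is to verify directly that the proposed $\mathcal{V}$ and $u_{\mathcal{V}}$ satisfy the two requirements in the definition of a simulation function, namely the output bound \eqref{eq:output_error_bounded} and the decay inequality \eqref{eq:simulation_fcn_decay}, leaning on the two properties of $\mathbf{M}$ guaranteed by Lemma \ref{lemma:M}. Throughout I would work in the error coordinate $\mathbf{e} = \mathbf{x}_1 - \mathbf{P}\mathbf{x}_2$, so that $\mathcal{V} = \|\sqrt{\mathbf{M}}\,\mathbf{e}\|$.

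For the output bound, I would first use the matching condition $\mathbf{C}_2 = \mathbf{C}_1\mathbf{P}$ to rewrite the output error as $\mathbf{C}_1\mathbf{x}_1 - \mathbf{C}_2\mathbf{x}_2 = \mathbf{C}_1\mathbf{e}$. The first inequality of Lemma \ref{lemma:M}, $\mathbf{M} \geq \mathbf{C}_1^T\mathbf{C}_1$, then gives $\mathbf{e}^T\mathbf{M}\mathbf{e} \geq \|\mathbf{C}_1\mathbf{e}\|^2$, and taking square roots yields \eqref{eq:output_error_bounded} immediately.

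The heart of the argument is the error dynamics. Substituting the interface $\mathbf{u}_1 = u_{\mathcal{V}} = \mathbf{R}\mathbf{u}_2 + \mathbf{Q}\mathbf{x}_2 + \mathbf{K}\mathbf{e}$ into $\dot{\mathbf{e}} = \mathbf{A}_1\mathbf{x}_1 + \mathbf{B}_1\mathbf{u}_1 - \mathbf{P}(\mathbf{A}_2\mathbf{x}_2 + \mathbf{B}_2\mathbf{u}_2)$ and using the coupling condition $\mathbf{P}\mathbf{A}_2 = \mathbf{A}_1\mathbf{P} + \mathbf{B}_1\mathbf{Q}$ to cancel all terms proportional to $\mathbf{x}_2$, I expect to obtain the closed-loop form
\begin{equation*}
\dot{\mathbf{e}} = (\mathbf{A}_1 + \mathbf{B}_1\mathbf{K})\mathbf{e} + (\mathbf{B}_1\mathbf{R} - \mathbf{P}\mathbf{B}_2)\mathbf{u}_2.
\end{equation*}
This cancellation is the one genuinely nontrivial algebraic step, and it is exactly what the conditions on $\mathbf{P}$ and $\mathbf{Q}$ are engineered to produce: the interface converts the mismatch between the two drift vector fields into the Hurwitz matrix $\mathbf{A}_1 + \mathbf{B}_1\mathbf{K}$ acting on $\mathbf{e}$, plus a residual disturbance driven only by $\mathbf{u}_2$.

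Finally I would differentiate $\mathcal{V}^2 = \mathbf{e}^T\mathbf{M}\mathbf{e}$ along this trajectory. Since the decay condition is only required where $\gamma(\|\mathbf{u}_2\|) < \mathcal{V}$, which forces $\mathcal{V} > 0$ and hence $\mathbf{e} \neq 0$, the function $\mathcal{V}$ is smooth at the points of interest and $\dot{\mathcal{V}} = \mathbf{e}^T\mathbf{M}\dot{\mathbf{e}}/\mathcal{V}$. The quadratic part is handled by the second inequality of Lemma \ref{lemma:M}, giving $\mathbf{e}^T\mathbf{M}(\mathbf{A}_1+\mathbf{B}_1\mathbf{K})\mathbf{e} \leq -\lambda\mathcal{V}^2$, while the cross term is bounded via Cauchy--Schwarz as $\mathbf{e}^T\mathbf{M}(\mathbf{B}_1\mathbf{R}-\mathbf{P}\mathbf{B}_2)\mathbf{u}_2 \leq \mathcal{V}\,\|\sqrt{\mathbf{M}}(\mathbf{B}_1\mathbf{R}-\mathbf{P}\mathbf{B}_2)\|\,\|\mathbf{u}_2\|$. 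Combining these and recognizing the definition \eqref{eq:gamma_fcn} of $\gamma$ should yield $\dot{\mathcal{V}} \leq -\lambda(\mathcal{V} - \gamma(\|\mathbf{u}_2\|))$, which is strictly negative precisely on the gating region $\gamma(\|\mathbf{u}_2\|) < \mathcal{V}$, establishing \eqref{eq:simulation_fcn_decay}. Note that the arbitrary matrix $\mathbf{R}$ enters only through this disturbance term, which explains its appearance in $\gamma$. I anticipate the main care-point to be the bookkeeping in the error-dynamics cancellation rather than any deep obstacle, since the statement has essentially pre-packaged the correct interface and class-$\kappa$ function.
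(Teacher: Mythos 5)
Your proposal is correct: the error-coordinate computation, the cancellation via $\mathbf{P}\mathbf{A}_2 = \mathbf{A}_1\mathbf{P} + \mathbf{B}_1\mathbf{Q}$, and the two uses of Lemma~\ref{lemma:M} reproduce exactly the standard argument of the cited reference \citep{girard2009hierarchical}; the paper itself imports the theorem without proof. The only cosmetic caveat is that when $\mathbf{B}_1\mathbf{R}-\mathbf{P}\mathbf{B}_2=\mathbf{0}$ (as in the paper's application) the resulting $\gamma\equiv 0$ is degenerate rather than strictly increasing, but then the decay condition holds unconditionally, so nothing breaks.
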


The matrix $\mathbf{R}$ acts as a ``feedforward'' mapping from $\mathbf{u}_2$ to $\mathbf{u}_1$. While the simulation relation holds for any $\mathbf{R}$ of proper dimensions, choosing $\mathbf{R}$ to minimize (\ref{eq:gamma_fcn}) is a logical choice, as this tightens the error bound $\epsilon$ (\ref{eq:epsilon}).

\section{Problem Formulation}\label{sec:problem_form}

\subsection{System Definitions}
In this paper, we consider controlling a balancer in single support using the LIP model as a template. 

An example of a balancer used as the anchor model is shown in Figure \ref{fig:balancer}. We assume that all joints are actuated and that the system is mounted to a single foot in contact. By modeling the balancer as a kinematic tree, we can write the equations of motion with minimal coordinates in ``manipulator'' form:
\begin{equation}\label{eq:balancer}
    \mathbf{H}(\mathbf{q})\ddot{\mathbf{q}} + \mathbf{C}(\mathbf{q},\dot{\mathbf{q}})\dot{\mathbf{q}} + \bm{\tau}_g = \bm{\tau}
\end{equation}
where $\mathbf{q}$ are joint angles, $\mathbf{H}(\mathbf{q})$ is the mass matrix, $\mathbf{C}(\mathbf{q},\mathbf{\dot{q}})$ accounts for Coriolis and centripetal terms, $\bm{\tau}_g$ is torque due to gravity, and $\bm{\tau}$ are applied torques. The gravitational vector is denoted $\mathbf{g}$ and has magnitude $g$. The total mass of the robot is $m$. Note that  (\ref{eq:balancer}) is valid only if contact constraints are not violated. 

We consider the CoM position $\mathbf{p}_G \in \mathbb{R}^3$ and centroidal momentum $\mathbf{h}_G = [\mathbf{k}_G^T ~ \mathbf{l}^T_G]^T \in \mathbb{R}^6$ as the task-space of the anchor model:
\begin{equation*}
    \mathbf{x}_{task} = 
    \begin{bmatrix}
        \mathbf{p}^T_{G} & \mathbf{h}^T_{G}
    \end{bmatrix}^T,
\end{equation*}
where $\mathbf{k}_G$ is the angular momentum about the CoM and $\mathbf{l}_G$ is the net linear momentum. The task-space dynamics can be computed in terms of the centroidal dynamics \cite{orin2013centroidal,wensing2016improved}:
\begin{gather}
    \dot{\mathbf{p}}_G = \frac{1}{m}\mathbf{l}_G, \\
    \mathbf{h}_{G} = \mathbf{A}_{G}(\mathbf{q})\dot{\mathbf{q}}, \\
    \dot{\mathbf{h}}_{G} = \mathbf{A}_{G}(\mathbf{q})\,\ddot{\mathbf{q}} + \dot{\mathbf{A}}_{G}(\mathbf{q},\dot{\mathbf{q}})\,\dot{\mathbf{q}}.
\end{gather}

The LIP model \cite{kajita20013d}, shown in Figure \ref{fig:lip}, is constrained to the ($x,z$) plane and governed by the horizontal position of the CoM $x_{com}$ and the center of pressure $x_{cop}$. The vertical position of the CoM is fixed at height $h$ at all times. The dynamics of the LIP model are given by
\begin{equation}
    \ddot{x}_{com} = \omega^2(x_{com}-x_{cop}),
\end{equation}
where $\omega = \sqrt{\frac{g}{h}}$ is the natural frequency of the LIP. 

For easier comparison with the task-space of the anchor model, we can write the LIP dynamics as a linear system
\begin{equation}\label{eq:lip}
    \dot{\mathbf{x}}_{lip} = \mathbf{A}_{lip}\mathbf{x}_{lip} + \mathbf{B}_{lip}u_{lip},
\end{equation}
where
\begin{equation*}
    \mathbf{x}_{lip} = 
    \begin{bmatrix}[ccc;{2pt/2pt}ccc;{2pt/2pt}ccc]
        x_{com} & 0 & h & 0 & 0 & 0 & m\dot{x}_{com} & 0 & 0 
    \end{bmatrix}^T
\end{equation*}
expresses the CoM position and spatial momentum of the LIP model and $u_{lip} = x_{cop}$ is the $x$-position of the center of pressure. While it may seem odd to include static elements like angular momentum in the LIP model, this particular definition of $\mathbf{x}_{lip}$ will allow us to draw a connection between an interface that certifies approximate simulation and the PD control law often used to track template models.

\subsection{Problem Statement}

As per Section \ref{subsec:approximate_simulation}, finding an interface and a simulation function allows us to bound the output error between two systems. With this in mind, our primary goal is to certify that the balancer approximately simulates the LIP.

More formally, we define the output of the template (LIP) model to be the full system state, i.e.,
\begin{equation*}
    \mathbf{y}_{lip} = \mathbf{x}_{lip},
\end{equation*}
and the output of the anchor model to be the task-space (position and spatial momentum of the CoM):
\begin{equation*}
    \mathbf{y}_{task} = \mathbf{x}_{task} = \begin{bmatrix} \mathbf{p}^T_{G} & \mathbf{h}^T_{G} \end{bmatrix}^T.
\end{equation*}
Our goal is to find a simulation function and an interface that certify that the anchor model (\ref{eq:balancer}) approximately simulates the template model (\ref{eq:lip}).

Finding such an interface and simulation function would allow us to guarantee that the CoM of the balancer tracks the LIP's CoM and bound the associated tracking error, subject to constraints on contact, torques, joint limits, etc. This leads us to our secondary goal, which is to use the simulation relation to project contact constraints for the balancer to constraints on the LIP model. That way, when we plan with the LIP model, we can be sure that the balancer will maintain ground contact. 

\section{Theoretical Results}\label{sec:results}

\subsection{Approximate Simulation for the LIP and Balancer}\label{subsec:approximate_sim_for_balancer}

In order to harness the results presented in Section \ref{subsec:linear_simulation}, we take a task-space feedback linearization of the balancer. To do so, we define a task-space interia matrix
\begin{equation}
    \bm{\Lambda} = (\mathbf{A}_{G}\mathbf{H}^{-1}\mathbf{A}_{G}^T)^{-1}
\end{equation}
and apply torques $\bm{\tau}$ such that
\begin{equation}\label{eq:feedback_linearization}
    \bm{\tau} = \mathbf{A}_{G}^T\bm{\Lambda}(\mathbf{u}_{task} - \dot{\mathbf{A}}_{G}\dot{\mathbf{q}} + \mathbf{A}_{G} \mathbf{H}^{-1}(\mathbf{C}\dot{\mathbf{q}}+\bm{\tau}_g)),
\end{equation}
where $\mathbf{u}_{task} = \dot{\mathbf{h}}_G$ is a virtual control. This gives rise to the task-space dynamics
\begin{equation}\label{eq:anchor_linear}
    \dot{\mathbf{x}}_{task} = \mathbf{A}_{task}\mathbf{x}_{task} + \mathbf{B}_{task}\mathbf{u}_{task},
\end{equation}
where 
\begin{equation*}
    \mathbf{A}_{task} = 
    \begin{bmatrix}
        \mathbf{0}_{3\times6} & (1/m)\mathbf{I}_{3\times3} \\
        \mathbf{0}_{6\times6} & \mathbf{0}_{6\times3}
    \end{bmatrix}, ~~
    \mathbf{B}_{task} =   
    \begin{bmatrix}
        \mathbf{0}_{3\times6} \\
        \mathbf{I}_{6\times6}
    \end{bmatrix}.
\end{equation*}

In the case where $dim(\mathbf{q}) > dim(\mathbf{x}_{task})$, we can resolve redundancies in the standard manner via the null-space projector $N$, such that applying $\bm{\tau} + N\bm{\tau}_0$ has the same effect on the task-space as applying $\bm{\tau}$ alone \cite{khatib1987unified}. This allows us to design $\bm{\tau}_0$ to achieve secondary control objectives like reducing extraneous motion or controlling a certain limb. In our example, we designed $\bm{\tau}_0$ to regulate the balancer to the nominal pose shown in Figure \ref{fig:balancer}. 

Now we have a linear system that describes the evolution of the planar balancer's CoM (\ref{eq:anchor_linear}), and another linear system describing the evolution of the LIP's CoM (\ref{eq:lip}). Taking the outputs as described above, i.e., $\mathbf{C}_{task} = \mathbf{C}_{lip} = \mathbf{I}_{9\times9}$, we can use the results from Section \ref{subsec:linear_simulation} to find a simulation function and an associated interface. 

\begin{figure}
    \centering
    \begin{tikzpicture}
        \node[draw] (template) at (0,0) {Template Model};
        \node[draw, below=0.7 of template] (task) {Task-Space Model};
        \node[draw, below=0.7 of task] (anchor) {Whole-Body (Anchor) Model};
        
        \draw[<->,thick,dashed] (template) -- node[right] {\textit{approximate simulation}} (task);
        \draw[<->,thick] (task) -- node[right] {\textit{feedback linearization}} (anchor);
    \end{tikzpicture}
    \caption{Heirarchy of models used in our approach. We show that the whole body (anchor) model simulates the template model via a task-space feedback linearization.}
    \label{fig:my_label}
\end{figure}
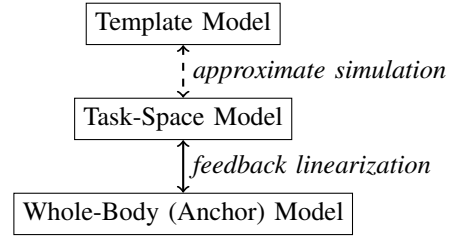

First, we find a feedback control gain matrix $\mathbf{K}$ to stabilize the centroidal dynamics (\ref{eq:anchor_linear}). There are many techniques to choose such a gain: LQR can be used to find a $\mathbf{K}$ that is optimal with respect to a certain cost functional. Given $\mathbf{K}$, we can use semi-definite programming to find $\mathbf{M}$ following Lemma \ref{lemma:M}. 

Then, following Theorem \ref{theorem:simulation}, we define the matrices $\mathbf{P}$, $\mathbf{Q}$, and $\mathbf{R}$ as follows:
\begin{equation*}
    \mathbf{P} = \mathbf{I}_{9\times9}, ~~~~ \mathbf{Q} = 
    \begin{bmatrix}
        \begin{bmatrix}\mathbf{0}_{3\times1}\\\omega^2\\\mathbf{0}_{2\times1}\end{bmatrix} \mathbf{0}_{6\times8}
    \end{bmatrix},~~~ \mathbf{R} = 
    \begin{bmatrix}
        \mathbf{0}_{3\times1} \\
        -\omega^2 \\
        \mathbf{0}_{2\times1}
    \end{bmatrix}.
\end{equation*}
This choice of $\mathbf{P}$ and $\mathbf{Q}$ satisfies the conditions of Theorem \ref{theorem:simulation}. This particular choice of $\mathbf{R}$ not only minimizes (\ref{eq:gamma_fcn}), but also establishes a connection between the interface and the PD controller often used in practice. 

To see this connection, consider a robot with mass $m=1$. A controller for the feedback-linearized anchor model is often designed as follows:
\begin{gather}
    \mathbf{u}_{task} = 
    \begin{bmatrix}
        -\mathbf{K}_{ang}\mathbf{k}_{G} \\
        \ddot{\mathbf{p}}_G
    \end{bmatrix}\label{eq:pd_final_control},\\
    \ddot{\mathbf{p}}_G = 
    \begin{bmatrix}
        \ddot{x}_{com} \\
        \mathbf{0}_{2\times1}
    \end{bmatrix} + \mathbf{K}_D(\dot{\mathbf{p}}_{lip}-\dot{\mathbf{p}}_G) + \mathbf{K}_P(\mathbf{p}_{lip}-\mathbf{p}_G),
\end{gather}
where $\mathbf{p}_{lip},\dot{\mathbf{p}}_{lip}$ are the CoM positions and velocities of the LIP, $\mathbf{K}_{ang}$ is an angular momentum damping term, and $\mathbf{K}_P$ and $\mathbf{K}_D$ are matrices of tuned control gains. Recalling that $\ddot{x}_{com} = \omega^2(x_{com}-u_{lip})$, we can rewrite the first term of (\ref{eq:pd_final_control}) as
\begin{align*}
    \begin{bmatrix}
        \mathbf{0}_{3\times1} \\
        \ddot{x}_{com} \\
        \mathbf{0}_{2\times1}
    \end{bmatrix}
    & = 
    \begin{bmatrix}
        \mathbf{0} \\
        \omega^2(x_{com}-u_{lip}) \\
        \mathbf{0}
    \end{bmatrix} \\
    & = \mathbf{R}u_{lip} + \mathbf{Q}\mathbf{x}_{lip}.
\end{align*}
Similarly, we can use the fact that $\mathbf{x}_{lip} = [\mathbf{p}_{lip}^T ~ \mathbf{0}_{3\times1}^T ~ \dot{\mathbf{p}}_{lip}^T]$ and $\mathbf{x}_{task} = [\mathbf{p}_{G}^T ~ \mathbf{k}_{G}^T ~ \dot{\mathbf{p}}_{G}^T]$ to rearrange the PD gain terms as
\begin{align*}
    \begin{bmatrix}
        \mathbf{K}_{ang}\mathbf{k}_G\\ 
        \mathbf{K}_D(\dot{\mathbf{p}}_{lip}-\dot{\mathbf{p}}_G) + \mathbf{K}_P(\mathbf{p}_{lip}-\mathbf{p}_G) 
    \end{bmatrix}
    = \mathbf{K}(\mathbf{x}_{task}-\mathbf{x}_{lip}),
\end{align*}
recovering the familiar form of the interface
\begin{equation}\label{eq:interface}
    \mathbf{u}_{task} = \mathbf{R}u_{lip} + \mathbf{Q}\mathbf{x}_{lip} + \mathbf{K}(\mathbf{x}_{task}-\mathbf{x}_{lip}).
\end{equation}
Thus the commonly used task-space PD controller is in fact a special case of an interface that admits an approximate simulation relation. 

Regardless of whether the stabilizing gain matrix $\mathbf{K}$ is generated as a PD controller or via techniques like LQR, the resulting simulation function takes the form 
\begin{equation}\label{eq:simulation_fcn}
    \mathcal{V}(\mathbf{x}_{task},\mathbf{x}_{lip}) = \sqrt{(\mathbf{x}_{task}-\mathbf{x}_{lip})^T\mathbf{M}(\mathbf{x}_{task}-\mathbf{x}_{lip})},
\end{equation}
where $\mathbf{M}$ can be computed with semi-definite programming as per Lemma \ref{lemma:M}. 

Recall from Theorem \ref{theorem:bound} that the simulation function $\mathcal{V}$ always bounds the output error, which in this case is $\|\mathbf{x}_{task}-\mathbf{x}_{lip}\|$. Furthermore, $\mathcal{V}$ always decreases, except when $\gamma(\|u_{lip}\|) \geq \mathcal{V}(\mathbf{x}_{task}, \mathbf{x}_{lip})$. From Theorem \ref{theorem:simulation}, we have
\begin{align*}
    \gamma(\nu) & = \frac{\|\sqrt{\mathbf{M}}(\mathbf{B}_{task}\mathbf{R}-\mathbf{P}\mathbf{B}_{lip})\|}{\lambda}\nu \\
     & = \frac{\|\sqrt{\mathbf{M}}(0)\|}{\lambda}\nu =0.
\end{align*}
In other words, $\mathcal{V}$ is decreasing along all trajectories of the LIP model, regardless of the value of $u_{lip}$. 
    
\subsection{Projecting Contact Constraints to the Template}

When we use this simulation relation to control the anchor system, we do have some restrictions. Intuitively, the CoM cannot follow arbitary trajectories: at the very least, it cannot reside beyond the limits imposed by a fixed ground contact. As an initial step toward handling such constraints, we show that contact constraints on the anchor model can be reformulated as linear constraints for MPC planning with the template model.

\begin{figure*}
    \centering
    \includegraphics[width=0.16\textwidth]{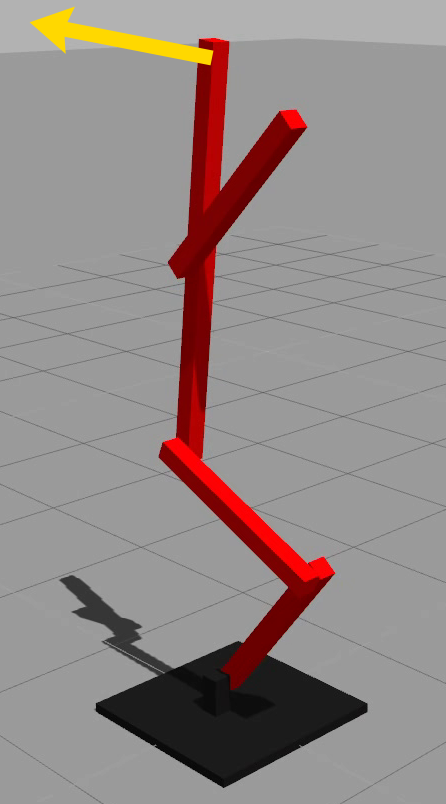}
    \includegraphics[width=0.16\textwidth]{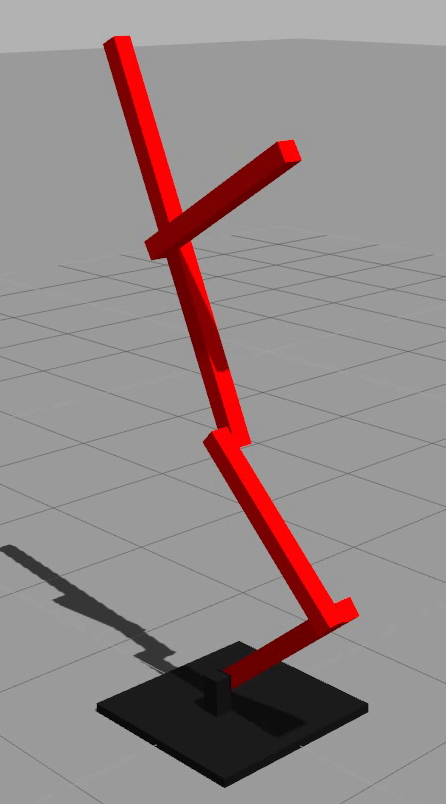}
    \includegraphics[width=0.16\textwidth]{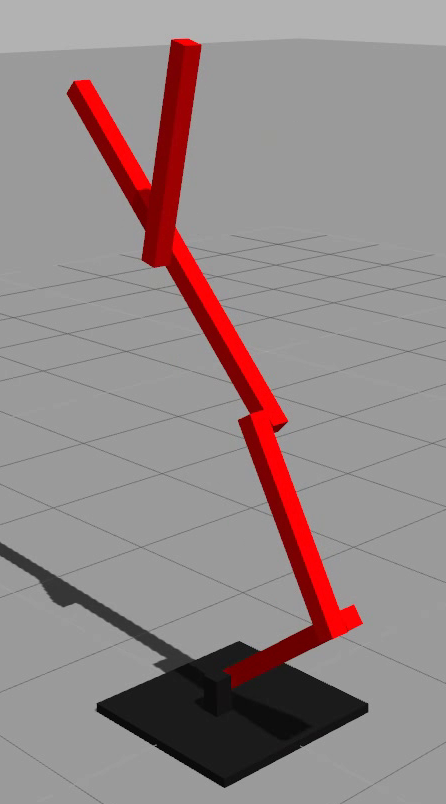}
    \includegraphics[width=0.16\textwidth]{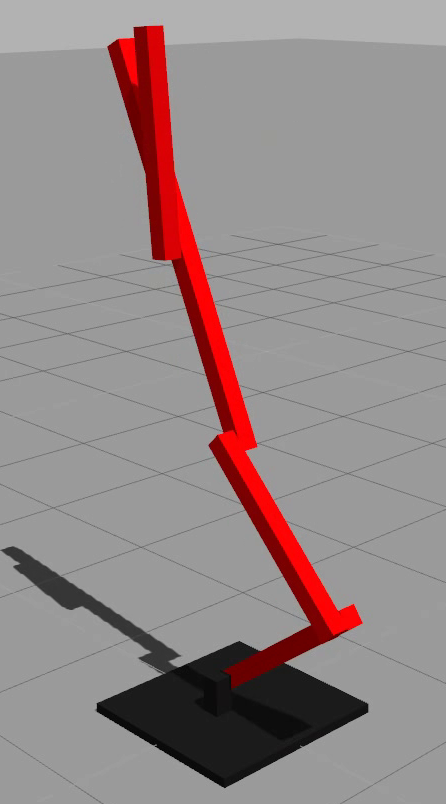}
    \includegraphics[width=0.16\textwidth]{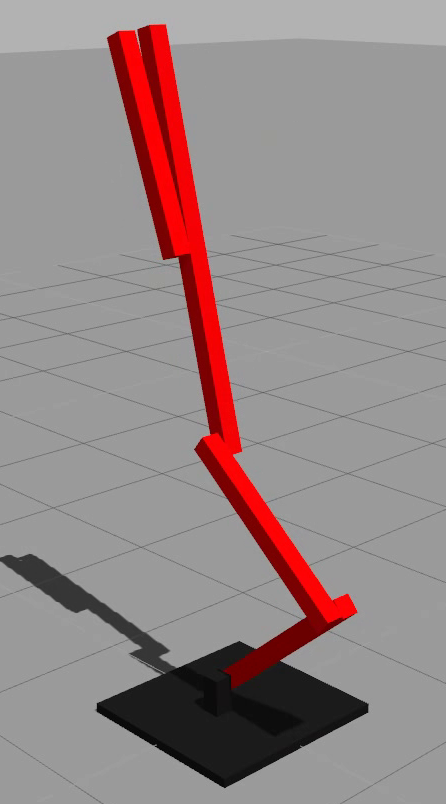}
    \includegraphics[width=0.16\textwidth]{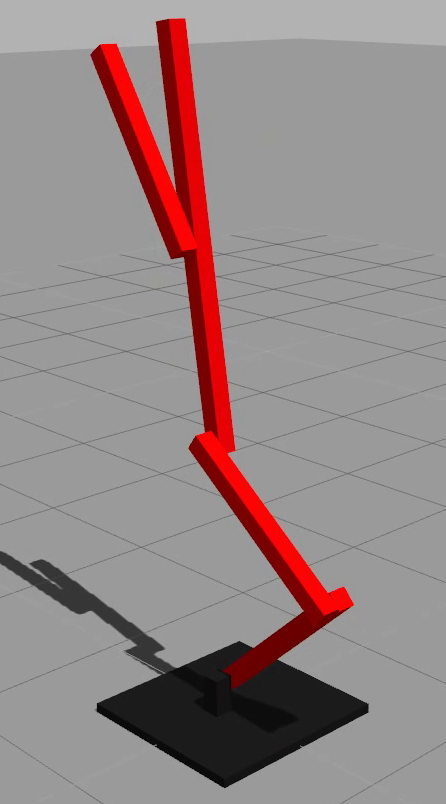}
    
    \caption{Snapshots taken at 1Hz as the multi-link balancer recovers from an initial push via planning with the LIP model. Using the approximate simulation relation between the balancer and the LIP allows us to guarantee that the balancer will track the LIP trajectory and maintain ground contact.}
    \label{fig:snapshots}
\end{figure*}

The Contact Wrench Cone (CWC) contact constraint states that all contact forces must remain in cones defined by a Coulomb friction model \cite{pang2000stability}. In terms of the spatial force expressed at the ground frame $\mathbf{f}_0$, the CWC can be expressed as
\begin{equation}
    CWC = \Big\{ \mathbf{f}_0 \mid \mathbf{f}_0 = \sum_j \begin{bmatrix}\mathbf{S}(\mathbf{p}_{c_j})\\\mathbf{I}\end{bmatrix}\bm{f}_{c_j}, \left \|\begin{bmatrix}f_{c_j}^x\\f_{c_j}^y\end{bmatrix}\right\| \leq \mu f_{c_j}^z \Big\},
\end{equation}
where $c_j$ are ground contacts, $\mathbf{p}_{c_j}$ are their associated positions in the ${0}$ frame, $\bm{f}_{c_j} \in \mathbb{R}^3$ are ground contact forces, $\mu$ is the coefficient of friction, and $\mathbf{S}(\cdot)$ is the skew-symmetric cross product matrix.

We can express the CWC constraint in terms of $\mathbf{u}_{task}=\dot{\mathbf{h}}_G$:
\begin{equation*}
    {}^0\mathbf{X}^*_G \left(\mathbf{u}_{task}-\begin{bmatrix}\mathbf{0}\\m\mathbf{g}\end{bmatrix}\right) \in CWC,
\end{equation*}
where ${}^0\mathbf{X}^*_G$ is the spatial force transform from the CoM frame $\{G\}$ to the ground frame $\{0\}$ given by
\begin{equation*}
    {}^0\mathbf{X}_G^* = 
    \begin{bmatrix}
        \mathbf{I} & \mathbf{S}(\mathbf{p}_G) \\
        \mathbf{0} & \mathbf{I}
    \end{bmatrix}.
\end{equation*}

If we consider friction pyramids as inner approximations of friction cones, we can derive a polytopic under-approximation of the CWC \cite{caron2015stability,caron2015leveraging}, i.e., $\mathbf{A} \mathbf{f}_0 \leq 0$. In this case, the CWC criterion can be written as a bilinear constraint on $\mathbf{x}_{task}$, $\mathbf{u}_{task}$
\begin{equation}\label{eq:cwc_bilinear}
    \mathbf{A} {}^0\mathbf{X}_G^*(\mathbf{u}_{task} - \begin{bmatrix}\mathbf{0}\\m\mathbf{g}\end{bmatrix}) \leq 0,
\end{equation}
where the bilinearity arises due to the dependence of ${}^0\mathbf{X}_G^*$ on a cross-product term of $\mathbf{p}_G$. 

However, if we constrain the CoM acceleration $\dot{\mathbf{l}}_G$, we can formulate the CWC criterion as a linear constraint on $\mathbf{u}_{task}$ and $\mathbf{x}_{task}$ \cite{audren20183}. This is shown by the following Theorem:

\begin{theorem}
    If the CoM acceleration $\dot{\mathbf{l}}_G$ of the anchor model is constrained by $\|\dot{\mathbf{l}}_G\|_{\infty} \leq \dot{l}_{max}$, then there exists a linear constraint $\mathbf{A}_{cwc}\begin{bmatrix}\mathbf{x}_{task} \\ \mathbf{u}_{task}\end{bmatrix} \leq \mathbf{b}_{cwc}$ that is a sufficient condition for the CWC contact criterion.
\end{theorem}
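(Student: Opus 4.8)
The plan is to begin from the polytopic CWC inequality (\ref{eq:cwc_bilinear}) and make its single source of bilinearity explicit. Writing $\mathbf{u}_{task} = [\dot{\mathbf{k}}_G^T~\dot{\mathbf{l}}_G^T]^T$ and substituting the block form of ${}^0\mathbf{X}_G^*$, the transformed wrench has angular part $\dot{\mathbf{k}}_G + \mathbf{S}(\mathbf{p}_G)(\dot{\mathbf{l}}_G - m\mathbf{g})$ and linear part $\dot{\mathbf{l}}_G - m\mathbf{g}$. Partitioning each row of $\mathbf{A}$ as $[\mathbf{a}_{\tau,i}^T~\mathbf{a}_{f,i}^T]$ and using $\mathbf{S}(\mathbf{p}_G)\mathbf{v} = \mathbf{p}_G \times \mathbf{v}$, row $i$ of (\ref{eq:cwc_bilinear}) reads $\mathbf{a}_{\tau,i}^T\dot{\mathbf{k}}_G + \mathbf{a}_{f,i}^T\dot{\mathbf{l}}_G - m(\mathbf{a}_{\tau,i}^T(\mathbf{p}_G\times\mathbf{g}) + \mathbf{a}_{f,i}^T\mathbf{g}) + \mathbf{a}_{\tau,i}^T(\mathbf{p}_G\times\dot{\mathbf{l}}_G) \leq 0$. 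Every term is linear in the components of $\mathbf{x}_{task}$ and $\mathbf{u}_{task}$ except the last, $\mathbf{a}_{\tau,i}^T(\mathbf{p}_G\times\dot{\mathbf{l}}_G)$, which couples the CoM position (in $\mathbf{x}_{task}$) with its acceleration (in $\mathbf{u}_{task}$).

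The key step is to replace this term by a linear over-approximation using the hypothesis $\|\dot{\mathbf{l}}_G\|_{\infty} \leq \dot{l}_{max}$. By the scalar triple-product identity, $\mathbf{a}_{\tau,i}^T(\mathbf{p}_G\times\dot{\mathbf{l}}_G) = \dot{\mathbf{l}}_G^T(\mathbf{a}_{\tau,i}\times\mathbf{p}_G)$, so for fixed $\mathbf{p}_G$ the term is linear in $\dot{\mathbf{l}}_G$, and H\"older's inequality yields $\mathbf{a}_{\tau,i}^T(\mathbf{p}_G\times\dot{\mathbf{l}}_G) \leq \dot{l}_{max}\|\mathbf{a}_{\tau,i}\times\mathbf{p}_G\|_1$. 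Since $\|\mathbf{v}\|_1 = \max_{\mathbf{s}\in\{-1,+1\}^3}\mathbf{s}^T\mathbf{v}$, this bound equals $\dot{l}_{max}\max_{\mathbf{s}}\mathbf{s}^T(\mathbf{a}_{\tau,i}\times\mathbf{p}_G)$, a maximum of finitely many expressions, each linear in $\mathbf{p}_G$.

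I would then assemble the constraint by enforcing, for each face $i$ and each sign pattern $\mathbf{s}$, the single linear inequality obtained by substituting this worst-case term for the bilinear one, collecting the constant gravity contributions $-m(\mathbf{a}_{\tau,i}^T(\mathbf{p}_G\times\mathbf{g}) + \mathbf{a}_{f,i}^T\mathbf{g})$ appropriately (the truly constant part into the right-hand side, the $\mathbf{p}_G$-linear part into $\mathbf{A}_{cwc}$). Stacking these rows gives $\mathbf{A}_{cwc}[\mathbf{x}_{task}^T~\mathbf{u}_{task}^T]^T \leq \mathbf{b}_{cwc}$. Sufficiency is then immediate: if the stacked inequalities hold, then for the realized, box-feasible $\dot{\mathbf{l}}_G$ I pick the maximizing sign pattern $\mathbf{s}^*$, and the genuine bilinear term is dominated by the $\mathbf{s}^*$ row, so (\ref{eq:cwc_bilinear}) and hence the CWC criterion hold. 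Note that $\dot{\mathbf{l}}_G$ is kept exact in the genuinely linear terms, so the resulting constraint still depends on $\mathbf{u}_{task}$ rather than collapsing to a condition on $\mathbf{p}_G$ alone.

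The main obstacle is exactly this bilinear coupling between $\mathbf{p}_G$ and $\dot{\mathbf{l}}_G$: the acceleration box is precisely what is needed to turn it into a convex, piecewise-linear, and hence polyhedrally representable over-approximation, and the delicate point is keeping that over-approximation sound (so the condition is sufficient, though conservative) while not discarding the exact dependence on $\mathbf{u}_{task}$. I also expect the planar LIP geometry---a single nonzero acceleration component and a nearly constant, positive height---to collapse most of the sign enumeration and tighten the bound considerably.
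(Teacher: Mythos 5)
Your proposal is correct and follows essentially the same route as the paper: isolate the single bilinear term $\mathbf{a}_{\tau,i}^T(\mathbf{p}_G\times\dot{\mathbf{l}}_G)$, exploit its linearity in $\dot{\mathbf{l}}_G$ over the box $\|\dot{\mathbf{l}}_G\|_\infty\leq\dot{l}_{max}$, and enforce one linear inequality per face and per vertex/sign pattern --- your H\"older-plus-$\ell_1$ argument is exactly the paper's ``linear functions on polytopes attain extrema at the $8$ cube vertices'' step in different clothing. Your explicit care to keep $\dot{\mathbf{l}}_G$ exact in the genuinely linear terms while substituting worst-case vertices only into the bilinear term is a slightly cleaner statement of what the paper intends, but it is not a different proof.
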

\begin{proof}
    We will prove by construction. First, recall that $\mathbf{a} \times \mathbf{b} = \mathbf{S}(\mathbf{a})\mathbf{b} = -\mathbf{S}(\mathbf{b})\mathbf{a}$.
    With this in mind, the CWC criterion (\ref{eq:cwc_bilinear}) can be written as
    \begin{gather}
        \mathbf{A} {}^0\mathbf{X}_G^*\dot{\mathbf{h}}_{G} \leq \mathbf{A} {}^0\mathbf{X}_G^* \begin{bmatrix}\mathbf{0}\\m\mathbf{g}\end{bmatrix} \\[1ex]
        \mathbf{A} \begin{bmatrix}\mathbf{I} \\ \mathbf{0}\end{bmatrix}\dot{\mathbf{k}}_G + \mathbf{A}\begin{bmatrix}\mathbf{S}(\mathbf{p}_G) \\ \mathbf{I}\end{bmatrix}\dot{\mathbf{l}}_G \leq \mathbf{A}\begin{bmatrix}\mathbf{I} & \mathbf{S}(\mathbf{p}_G) \\ \mathbf{0} & \mathbf{I}\end{bmatrix}\begin{bmatrix}\mathbf{0}\\m\mathbf{g}\end{bmatrix} \\[1ex]
        \mathbf{A}\dot{\mathbf{h}}_G + \mathbf{A}\begin{bmatrix}\mathbf{S}(m\mathbf{g})-\mathbf{S}(\dot{\mathbf{l}}_G) \\ \mathbf{0}\end{bmatrix}\mathbf{p}_G \leq \mathbf{A}\begin{bmatrix}\mathbf{0} \\ m\mathbf{g}\end{bmatrix}\label{eq:bilinear_constraint}
    \end{gather}
    Noting that $m\mathbf{g}$ and $\mathbf{A}$ are fixed for a given ground contact, the only remaining nonconvexity is from the $\mathbf{S}(\dot{\mathbf{l}}_G)$ term.
    
    Note that the left hand side of (\ref{eq:bilinear_constraint}) is bilinear in $[\mathbf{x}_{task}^T ~ \mathbf{u}_{task}^T]^T$ but linear in $\dot{\mathbf{l}}_G$ alone. Furthermore, the constraint $\|\dot{\mathbf{l}}_G\|_{\infty} \leq \dot{l}_{max}$ defines a polytope (specifically, a cube in $\mathbb{R}^3$). Linear functions constrained to polytopes have extrema at the vertices, so we can create a linear inner approximation of 
    (\ref{eq:bilinear_constraint}) by enforcing (\ref{eq:bilinear_constraint}) for $\dot{\mathbf{l}}_G$ evaluated at all 8 corners of the cube defined by $\|\dot{\mathbf{l}}_G\|_{\infty} \leq \dot{l}_{max}$.
    
    This new constraint is linear in the position of the CoM $\mathbf{p}_G$ and the time derivative of the centroidal momentum $\dot{\mathbf{h}}_G$, allowing us to write it as a constraint of the form
    \begin{equation}\label{eq:linear_constraint}
        \mathbf{A}_{cwc}\begin{bmatrix}\mathbf{x}_{task} \\ \mathbf{u}_{task}\end{bmatrix} \leq \mathbf{b}_{cwc}.
    \end{equation}
\end{proof}

This linear encoding of the CWC criterion allows us to perform MPC for the template model by solving a QP. To account for contact constraints, we consider the anchor variables $\mathbf{x}_{task}, \mathbf{u}_{task}$ as additional optimization variables, and enforce the interface (\ref{eq:interface}) as a constraint. We then perform MPC using a simple forward Euler direct collocation scheme as follows:
\begin{align}
    \min &~ 
    \sum_{t=1}^{N-1} \|\mathbf{x}_{lip}^t\|^2_{\mathbf{Q}_{mpc}} + \|u_{lip}^t\|^2_{R_{mpc}} + \|\mathbf{x}_{lip}^N\|_{\mathbf{Q}_f}\label{eq:mpc_qp} \\
    \text{s.t.} &~ \mathbf{x}_{lip}^0, \mathbf{x}_{task}^0 \text{ given} \label{eq:initial_cond}\\
    &~ \mathbf{x}_{lip}^{t+1} = \mathbf{x}_{lip}^t + (\mathbf{A}_{lip}\mathbf{x}_{lip} + \mathbf{B}_{lip}u_{lip})dt \label{eq:lip_euler}\\
    &~ \mathbf{x}_{task}^{t+1} = \mathbf{x}_{task}^t + (\mathbf{A}_{task}\mathbf{x}_{task} + \mathbf{B}_{task}\mathbf{u}_{task})dt \label{eq:anchor_euler}\\
    &~ \mathbf{u}_{task}^t = \mathbf{R}u_{lip}^t + \mathbf{Q}\mathbf{x}_{lip}^t + \mathbf{K}(\mathbf{x}_{task}^t - \mathbf{x}_{lip}^t) \label{eq:interface_constraint}\\
    & \mathbf{A}_{cwc}\begin{bmatrix}\mathbf{x}^t_{task} \\ \mathbf{u}^t_{task}\end{bmatrix} \leq \mathbf{b}_{cwc} \label{eq:contact_constraint} \\
    & \|\dot{\mathbf{l}}_G\|_\infty \leq \dot{l}_{max} \label{eq:accel_bound},
\end{align}
where (\ref{eq:initial_cond}) fixes the initial conditions, (\ref{eq:lip_euler}-\ref{eq:anchor_euler}) enforce forward Euler dynamic constraints, (\ref{eq:interface_constraint}) enforces the feasibility of the interface, and (\ref{eq:contact_constraint}-\ref{eq:accel_bound}) ensures contact constraints are met in the anchor model.

\begin{remark}
    The constraints (\ref{eq:contact_constraint}-\ref{eq:accel_bound}) represent an inner approximation of the CWC, and as such, involve a tradeoff with the parameter $\dot{l}_{max}$. If $\dot{l}_{max}$ is too high, the intersection of constraints defined by (\ref{eq:bilinear_constraint}) will be negligible or empty. On the other hand, if $\dot{l}_{max}$ is too small, $\mathbf{u}_{task}$ may not be able to meet the interface constraint (\ref{eq:interface_constraint}). Similarly, while we can find simulation relations with arbitrarily high decay rates $\lambda$, the resulting large $\mathbf{K}$ may cause a conflict between the interface constraint (\ref{eq:interface_constraint}) and the CWC constraints (\ref{eq:contact_constraint}-\ref{eq:accel_bound}). 
\end{remark}

These constraints allow us to plan using the (lower-order) LIP model in an MPC fashion. The fact that the constraints are linear means that planning is as simple as solving a QP, for which many fast solvers exist. Then, when the anchor model tracks the nominal LIP trajectory using the interface (\ref{eq:interface}), tracking is guaranteed to be $\epsilon$-close by Theorem \ref{theorem:bound}, as long as additional constraints on torques, joint limits, self-collisions, etc. are met. 

\section{Simulation Results}\label{sec:example}

As an example, we control the balancer shown in Figure~\ref{fig:models}. We assume the balancer is constrained to the ($x,z$) plane, so $\mathbf{x}_{task} \in \mathbb{R}^5$, $\mathbf{u}_{task} \in \mathbb{R}^3$. All links except the ``torso'' are 1m long, uniform density, and have mass of 1kg. The foot was a square platform with length 1m and mass 5kg. The coefficient of friction was 0.3. The torso link has length 2m and mass 2kg. For the LIP model, we assume a height of $h=1.75$m. 

We simulated the balancer in Gazebo \cite{koenig2004design}, interfacing with Matlab via ROS \cite{Quigley09}. We used Casadi \cite{Andersson2018} and the qpOASES solver \cite{Ferreau2014} to solve (\ref{eq:mpc_qp}). All computation was performed on a laptop with an Intel i7 processor and 32GB RAM.

For the interface, we chose $\mathbf{K}$ by solving the following infinite horizon LQR problem:
\begin{align*}
    \min_{\mathbf{u}_{task}} & \int_0^\infty (\mathbf{x}_{task}^T\mathbf{x}_{task} + 0.01\mathbf{u}^T_{task}\mathbf{u}_{task}) dt \\
    \text{s.t. } & \dot{\mathbf{x}}_{task} = \mathbf{A}_{task}\mathbf{x}_{task} + \mathbf{B}_{task}\mathbf{u}_{task}.
\end{align*}
We then set $\lambda = 0.1$ and solved the SDP described in Section \ref{subsec:linear_simulation} to find $\mathbf{M}$. To linearize the CWC constraint, we chose $\dot{l}_{max} = 5N$.

We compared this approach with a standard QP for tracking the template model as per \cite{herzog2014balancing,escande2014hierarchical,wensing2013generation}:
\begin{align}\label{eq:qp}
    \min & \|\mathbf{J}_{com}\ddot{\mathbf{q}}+\dot{\mathbf{J}}_{com}\dot{\mathbf{q}} - \mathbf{u}_{com}\|_2^2 + w\|\ddot{\mathbf{q}}-\ddot{\mathbf{q}}^{des}\|_2^2\\
    \text{s.t. } & \mathbf{H}\ddot{\mathbf{q}} + \mathbf{C}\dot{\mathbf{q}} + \bm{\tau}_g = \bm{\tau} \\
    & \mathbf{f}_{0} \in CWC \\
    & \bm{\tau}_{min} \leq \bm{\tau} \leq \bm{\tau}_{max},
\end{align}
where $\mathbf{J}_{com}$ is the CoM jacobian and $\mathbf{u}_{com}$ is a desired CoM acceleration, which tracks a nominal template trajectory (see Section \ref{subsec:approximate_sim_for_balancer}). The secondary objective, weighted by $w=0.1$, is determined by a desired joint-space acceleration $\ddot{\mathbf{q}}^{des}$, which regulates the robot to the static position shown in Figure \ref{fig:balancer}.   

\begin{figure*}
    \centering
    \begin{subfigure}{0.24\linewidth}
        \centering
        \includegraphics[width=\linewidth]{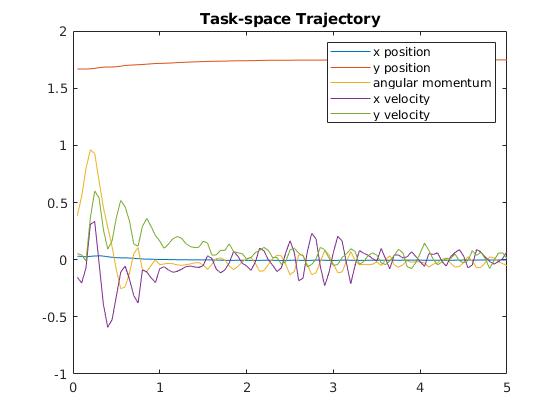}
        \caption{QP Approach, 20N Push}
    \end{subfigure}
    \begin{subfigure}{0.24\linewidth}
        \centering
        \includegraphics[width=\linewidth]{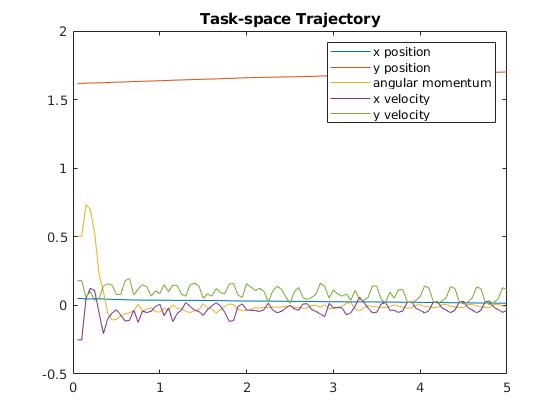}
        \caption{Our Approach, 20N Push}
    \end{subfigure}
    \begin{subfigure}{0.24\linewidth}
        \centering
        \includegraphics[width=\linewidth]{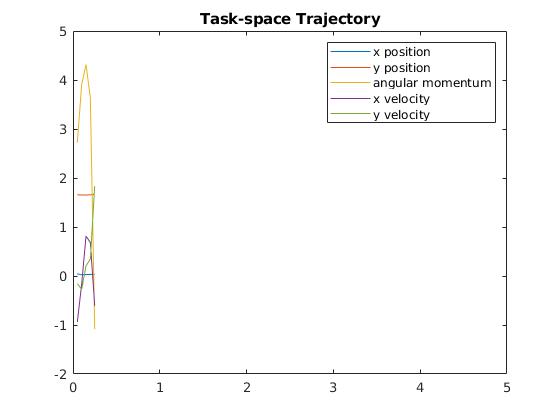}
        \caption{QP Approach, 100N Push}
    \end{subfigure}
    \begin{subfigure}{0.24\linewidth}
        \centering
        \includegraphics[width=\linewidth]{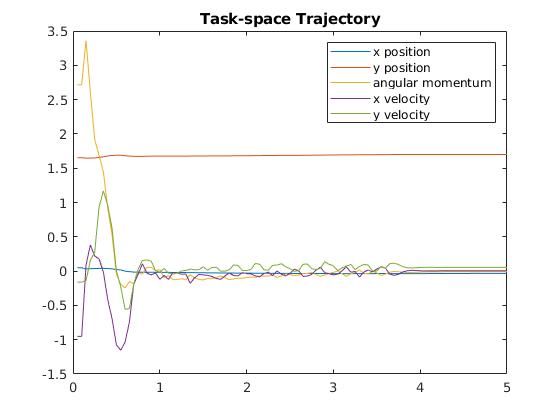}
        \caption{Our Approach, 100N Push}
    \end{subfigure}
    \caption{task-space trajectories for the push recovery scenario. A high angular momentum disturbance in the 100N scenario causes the traditional QP approach to fail, while our approximate simulation-based controller successfully balances the robot.}
    \label{fig:task_space_trajectories}
\end{figure*}

\begin{figure*}
    \centering
    \begin{subfigure}{0.24\linewidth}
        \centering
        \includegraphics[width=\linewidth]{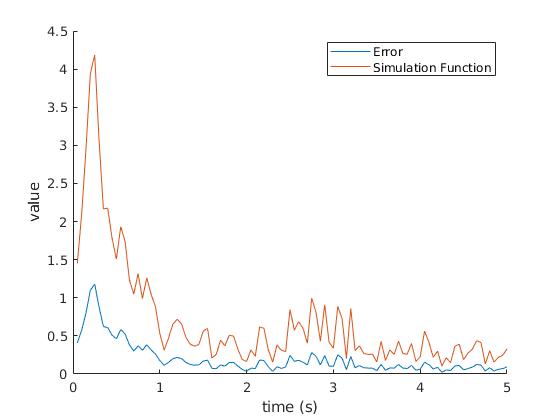}
        \caption{QP Approach, 20N Push}
    \end{subfigure}
    \begin{subfigure}{0.24\linewidth}
        \centering
        \includegraphics[width=\linewidth]{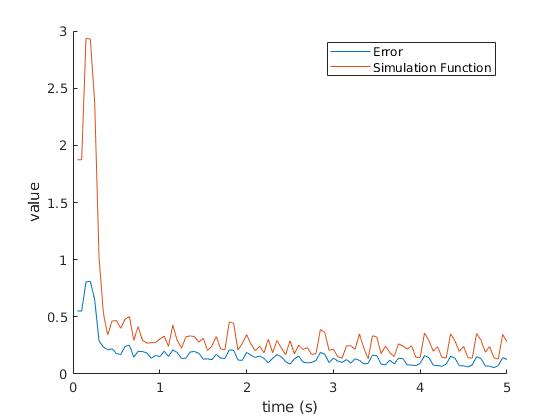}
        \caption{Our Approach, 20N Push}
    \end{subfigure}
    \begin{subfigure}{0.24\linewidth}
        \centering
        \includegraphics[width=\linewidth]{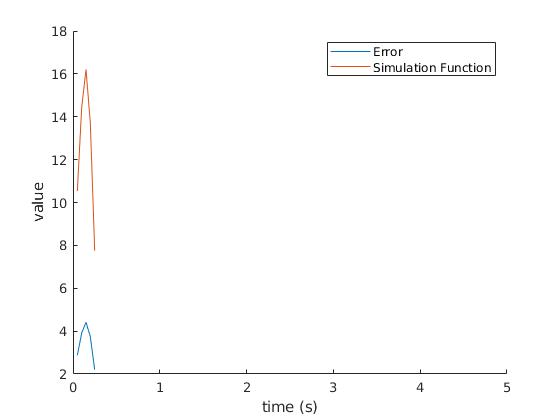}
        \caption{QP Approach, 100N Push}
    \end{subfigure}
    \begin{subfigure}{0.24\linewidth}
        \centering
        \includegraphics[width=\linewidth]{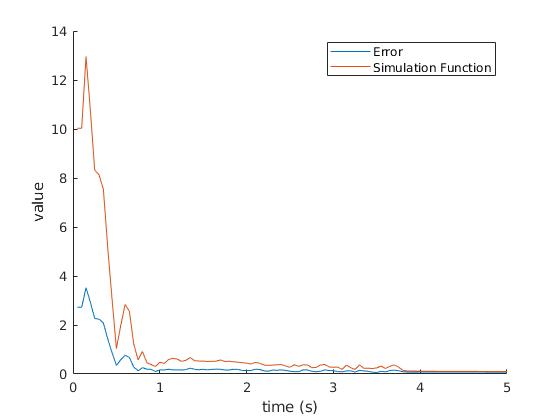}
        \caption{Our Approach, 100N Push}
    \end{subfigure}
    \caption{Output error and the simulation function (\ref{eq:simulation_fcn}) over time for the push-recovery scenario. For all scenarios, the simulation function bounds the output error. Using our approach, the simulation function is nonincreasing (apart from time-discretization error) after the push.}
    \label{fig:error_bounds}
\end{figure*}
Starting from an initial balanced state, a force was applied for 10ms to the top of the torso link in the $-x$ direction to simulate a push. For our approach, we performed MPC for the template model (\ref{eq:mpc_qp}) with $\mathbf{Q}_{mpc} = \text{diag}([10~0~0~10~0])$, $R_{mpc}=5.0$, and $\mathbf{Q}_f = 100\mathbf{Q}_{mpc}$. We applied commands and solved this MPC problem at 20Hz ($dt=0.05$) with horizon $N=5$. For the traditional QP approach, we computed a nominal control for the template model via LQR and solved (\ref{eq:qp}) at 20Hz as well.   

For a $20N$ push, both the QP approach and our approximate simulation approach successfully returned the balancer to an upright position. For a $100N$ push, however, the traditional QP (\ref{eq:qp}) became infeasible after 5 timesteps, causing the robot to fall down. Our approach successfully recovered from the $100N$ push. 

Task-space trajectories for these simulations are shown in Figure \ref{fig:task_space_trajectories}, and the associated errors and simulation functions are shown in Figure \ref{fig:error_bounds}. For all the scenarios, the simulation function (\ref{eq:simulation_fcn}) bounds the output error $\|\mathbf{x}_{task}-\mathbf{x}_{lip}\|$. However, this bound is tighter when using our approach, and apart from some noise due to time discretization, decreases over time. Snapshots of our approach recovering from a 100N push are shown in Figure \ref{fig:snapshots}. 


\section{Discussion}\label{sec:discuss}

Our primary result is showing a formal connection between template and anchor models, namely that a balancer in single support approximately simulates a LIP. The associated interface, interestingly, is a generalization of the partial feedback linearization-based PD controller often used to track template models. Controlling the anchor model with this interface brings us closer to providing formal guarantees regarding tracking performance. Specifically, we can compute a simulation function which bounds the output error between the two models as long as constraints on contacts, torques, joint limits, etc. are not violated. 

As a secondary result, we derived linear constraints that are sufficient conditions for maintaining ground contact. This allows for MPC planning with the template model by solving a QP, for which there are many fast solvers. While our simulation demonstrated solving this QP with a relatively short horizon ($N=5$), we expect that further optimization of the code and conversion to C/C++ will enable planning with longer horizons. 

Using this approach for control of a planar balancer enabled recovery from a large-magnitude push, which a standard QP controller was unable to recover from. While additional tuning of this controller and running it at a higher rate would likely improve performance, the fact that our controller explicitly accounts for angular momentum and contact constraints in the template planning phase suggests that it is more robust to angular momentum disturbances. Furthermore, this explicit accounting for angular momentum results in non-preprogrammed behavior of the ``arm'' link: the arm swings upward after the push, seemingly in at attempt to regulate angular momentum.

Finally, the particular interface (\ref{eq:interface}) that we propose certifies approximate simulation, but approximate simulation is a relationship between models. This means that there may be other control policies that also provide guaranteed tracking performance. An important open question is whether other control approaches, such as the QP constraint approach commonly used in whole-body control, can also be shown to provide such provably correct tracking of the template model. 

\begin{figure}
    \centering
    \includegraphics[width=0.6\linewidth]{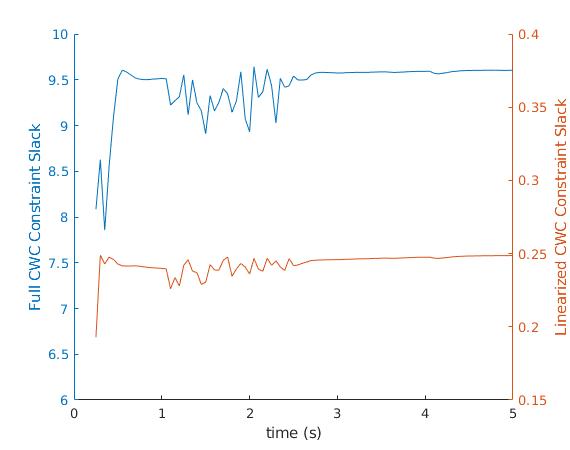}
    \caption{Slack for the full CWC constraint (\ref{eq:cwc_bilinear}) and the linearized CWC constraint (\ref{eq:contact_constraint}) following a 20N push. }
    \label{fig:constraint_slacks}
\end{figure}

\section{Conclusion}\label{sec:conclusion}

We explored approximate simulation as a means of providing formal connections between template and anchor models. We showed that a balancer in single support approximately simulates a linear inverted pendulum and derived the associated interface. We found that this interface is a generalization of the PD controller that is commonly used to track template models. As a secondary result, we derived linear constraints that are sufficient conditions for maintaining ground contact. These allow for rapid planning and replanning in the template model by solving a quadratic program. In a simulated push-recovery scenario for a planar balancer, our approximate simulation-based controller recovered from a large push disturbance that a conventional whole-body controller failed to recover from. Future work will extend these results to account for joint and torque limits, self-collisions, and multi-contact scenarios. 

\bibliographystyle{IEEEtranN}
\balance
{\footnotesize
\bibliography{references}}

\begin{thebibliography}{36}
\providecommand{\natexlab}[1]{#1}
\providecommand{\url}[1]{#1}
\csname url@samestyle\endcsname
\providecommand{\newblock}{\relax}
\providecommand{\bibinfo}[2]{#2}
\providecommand{\BIBentrySTDinterwordspacing}{\spaceskip=0pt\relax}
\providecommand{\BIBentryALTinterwordstretchfactor}{4}
\providecommand{\BIBentryALTinterwordspacing}{\spaceskip=\fontdimen2\font plus
\BIBentryALTinterwordstretchfactor\fontdimen3\font minus
  \fontdimen4\font\relax}
\providecommand{\BIBforeignlanguage}[2]{{%
\expandafter\ifx\csname l@#1\endcsname\relax
\typeout{** WARNING: IEEEtranN.bst: No hyphenation pattern has been}%
\typeout{** loaded for the language `#1'. Using the pattern for}%
\typeout{** the default language instead.}%
\else
\language=\csname l@#1\endcsname
\fi
#2}}
\providecommand{\BIBdecl}{\relax}
\BIBdecl

\bibitem[Wieber et~al.(2016)Wieber, Tedrake, and
  Kuindersma]{wieber2016modeling}
P.-B. Wieber, R.~Tedrake, and S.~Kuindersma, ``Modeling and control of legged
  robots,'' in \emph{Springer Handbook of Robotics}, B.~Siciliano and
  O.~Khatib, Eds.\hskip 1em plus 0.5em minus 0.4em\relax Springer, 2016, pp.
  1203--1234.

\bibitem[Geyer and Saranli(2018)]{Geyer2018}
H.~Geyer and U.~Saranli, ``Gait based on the spring-loaded inverted pendulum,''
  in \emph{Humanoid Robotics: A Reference}, A.~Goswami and P.~Vadakkepat,
  Eds.\hskip 1em plus 0.5em minus 0.4em\relax Springer Netherlands, 2018, pp.
  1--25.

\bibitem[Kajita et~al.(2001)Kajita, Kanehiro, Kaneko, Yokoi, and
  Hirukawa]{kajita20013d}
S.~Kajita, F.~Kanehiro, K.~Kaneko, K.~Yokoi, and H.~Hirukawa, ``The {3D} linear
  inverted pendulum mode: A simple modeling for a biped walking pattern
  generation,'' in \emph{IEEE/RSJ International Conference on Intelligent
  Robots and Systems}, vol.~1, 2001, pp. 239--246.

\bibitem[Chen and Byl(2012)]{chen2012analysis}
M.-Y. Chen and K.~Byl, ``Analysis and control techniques for the compass gait
  with a torso walking on stochastically rough terrain,'' in \emph{American
  Control Conference (ACC)}, 2012, pp. 3451--3458.

\bibitem[Wensing and Orin(2013)]{wensing2013high}
P.~M. Wensing and D.~E. Orin, ``High-speed humanoid running through control
  with a {3D-SLIP} model,'' in \emph{IEEE/RSJ International Conference on
  Intelligent Robots and Systems}, 2013, pp. 5134--5140.

\bibitem[Posa et~al.(2017)Posa, Koolen, and Tedrake]{posa2017balancing}
M.~Posa, T.~Koolen, and R.~Tedrake, ``Balancing and step recovery capturability
  via sums-of-squares optimization,'' in \emph{Robotics: Science and Systems},
  2017.

\bibitem[Kajita et~al.(2003)Kajita, Kanehiro, Kaneko, Fujiwara, Harada, Yokoi,
  and Hirukawa]{kajita2003biped}
S.~Kajita, F.~Kanehiro, K.~Kaneko, K.~Fujiwara, K.~Harada, K.~Yokoi, and
  H.~Hirukawa, ``Biped walking pattern generation by using preview control of
  zero-moment point,'' in \emph{IEEE International Conference on Robotics and
  Automation}, vol.~2, 2003, pp. 1620--1626.

\bibitem[Wieber(2006)]{wieber2006trajectory}
P.-B. Wieber, ``Trajectory free linear model predictive control for stable
  walking in the presence of strong perturbations,'' in \emph{IEEE-RAS
  International Conference on Humanoid Robots}, 2006, pp. 137--142.

\bibitem[Full and Koditschek(1999)]{full1999templates}
R.~J. Full and D.~E. Koditschek, ``Templates and anchors: neuromechanical
  hypotheses of legged locomotion on land,'' \emph{Journal of experimental
  biology}, vol. 202, no.~23, pp. 3325--3332, 1999.

\bibitem[Herzog et~al.(2014)Herzog, Righetti, Grimminger, Pastor, and
  Schaal]{herzog2014balancing}
A.~Herzog, L.~Righetti, F.~Grimminger, P.~Pastor, and S.~Schaal, ``Balancing
  experiments on a torque-controlled humanoid with hierarchical inverse
  dynamics,'' in \emph{IEEE/RSJ International Conference on Intelligent Robots
  and Systems}, 2014, pp. 981--988.

\bibitem[Escande et~al.(2014)Escande, Mansard, and
  Wieber]{escande2014hierarchical}
A.~Escande, N.~Mansard, and P.-B. Wieber, ``Hierarchical quadratic programming:
  Fast online humanoid-robot motion generation,'' \emph{The International
  Journal of Robotics Research}, vol.~33, no.~7, pp. 1006--1028, 2014.

\bibitem[{Wensing} and {Orin}(2013)]{wensing2013generation}
P.~M. {Wensing} and D.~E. {Orin}, ``Generation of dynamic humanoid behaviors
  through task-space control with conic optimization,'' in \emph{2013 IEEE
  International Conference on Robotics and Automation}, May 2013, pp.
  3103--3109.

\bibitem[Kuindersma et~al.(2016)Kuindersma, Deits, Fallon, Valenzuela, Dai,
  Permenter, Koolen, Marion, and Tedrake]{kuindersma2016optimization}
S.~Kuindersma, R.~Deits, M.~Fallon, A.~Valenzuela, H.~Dai, F.~Permenter,
  T.~Koolen, P.~Marion, and R.~Tedrake, ``Optimization-based locomotion
  planning, estimation, and control design for the atlas humanoid robot,''
  \emph{Autonomous Robots}, vol.~40, no.~3, pp. 429--455, 2016.

\bibitem[Feng et~al.(2015)Feng, Whitman, Xinjilefu, and
  Atkeson]{feng2015optimization}
S.~Feng, E.~Whitman, X.~Xinjilefu, and C.~G. Atkeson, ``Optimization-based full
  body control for the {DARPA} robotics challenge,'' \emph{Journal of Field
  Robotics}, vol.~32, no.~2, pp. 293--312, 2015.

\bibitem[Dai et~al.(2014)Dai, Valenzuela, and Tedrake]{dai2014whole}
H.~Dai, A.~Valenzuela, and R.~Tedrake, ``Whole-body motion planning with
  centroidal dynamics and full kinematics,'' in \emph{IEEE-RAS International
  Conference on Humanoid Robots}, 2014, pp. 295--302.

\bibitem[Poulakakis and Grizzle(2009)]{poulakakis2009spring}
I.~Poulakakis and J.~W. Grizzle, ``The spring loaded inverted pendulum as the
  hybrid zero dynamics of an asymmetric hopper,'' \emph{IEEE Transactions on
  Automatic Control}, vol.~54, no.~8, pp. 1779--1793, 2009.

\bibitem[Westervelt et~al.(2003)Westervelt, Grizzle, and
  Koditschek]{westervelt2003hybrid}
E.~R. Westervelt, J.~W. Grizzle, and D.~E. Koditschek, ``Hybrid zero dynamics
  of planar biped walkers,'' \emph{IEEE transactions on automatic control},
  vol.~48, no.~1, pp. 42--56, 2003.

\bibitem[Girard and Pappas(2011)]{girard2011approximate}
A.~Girard and G.~J. Pappas, ``Approximate bisimulation: A bridge between
  computer science and control theory,'' \emph{European Journal of Control},
  vol.~17, no. 5-6, pp. 568--578, 2011.

\bibitem[Baier and Katoen(2008)]{baier2008principles}
C.~Baier and J.-P. Katoen, \emph{Principles of model checking}.\hskip 1em plus
  0.5em minus 0.4em\relax MIT Press, 2008.

\bibitem[Pang and Trinkle(2000)]{pang2000stability}
J.-S. Pang and J.~Trinkle, ``Stability characterizations of rigid body contact
  problems with coulomb friction,'' \emph{ZAMM-Journal of Applied Mathematics
  and Mechanics}, vol.~80, no.~10, pp. 643--663, 2000.

\bibitem[Dai and Tedrake(2016)]{dai2016planning}
H.~Dai and R.~Tedrake, ``Planning robust walking motion on uneven terrain via
  convex optimization,'' in \emph{IEEE-RAS International Conference on Humanoid
  Robots}, 2016, pp. 579--586.

\bibitem[Audren and Kheddar(2018)]{audren20183}
H.~Audren and A.~Kheddar, ``{3-D} robust stability polyhedron in
  multicontact,'' \emph{IEEE Transactions on Robotics}, vol.~34, no.~2, pp.
  388--403, 2018.

\bibitem[Caron and Kheddar(2016)]{caron2016multi}
S.~Caron and A.~Kheddar, ``Multi-contact walking pattern generation based on
  model preview control of {3D} {COM} accelerations,'' in \emph{IEEE-RAS
  International Conference on Humanoid Robots}, 2016, pp. 550--557.

\bibitem[Girard and Pappas(2009)]{girard2009hierarchical}
A.~Girard and G.~J. Pappas, ``Hierarchical control system design using
  approximate simulation,'' \emph{Automatica}, vol.~45, no.~2, pp. 566--571,
  2009.

\bibitem[Girard and Pappas(2005)]{girard2005approximate}
------, ``Approximate bisimulations for nonlinear dynamical systems,'' in
  \emph{IEEE Conference on Decision and Control}, 2005, pp. 684--689.

\bibitem[Murthy et~al.(2015)Murthy, Islam, Smolka, and
  Grosu]{murthy2015computing}
A.~Murthy, M.~A. Islam, S.~A. Smolka, and R.~Grosu, ``Computing bisimulation
  functions using sos optimization and $\delta$-decidability over the reals,''
  in \emph{International Conference on Hybrid Systems: Computation and
  Control}.\hskip 1em plus 0.5em minus 0.4em\relax ACM, 2015, pp. 78--87.

\bibitem[Girard and Pappas(2007)]{girard2007approximate}
A.~Girard and G.~J. Pappas, ``Approximate bisimulation relations for
  constrained linear systems,'' \emph{Automatica}, vol.~43, no.~8, pp.
  1307--1317, 2007.

\bibitem[Orin et~al.(2013)Orin, Goswami, and Lee]{orin2013centroidal}
D.~E. Orin, A.~Goswami, and S.-H. Lee, ``Centroidal dynamics of a humanoid
  robot,'' \emph{Autonomous Robots}, vol.~35, no. 2-3, pp. 161--176, 2013.

\bibitem[Wensing and Orin(2016)]{wensing2016improved}
P.~M. Wensing and D.~E. Orin, ``Improved computation of the humanoid centroidal
  dynamics and application for whole-body control,'' \emph{International
  Journal of Humanoid Robotics}, vol.~13, no.~01, p. 1550039, 2016.

\bibitem[Khatib(1987)]{khatib1987unified}
O.~Khatib, ``A unified approach for motion and force control of robot
  manipulators: The operational space formulation,'' \emph{IEEE Journal on
  Robotics and Automation}, vol.~3, no.~1, pp. 43--53, 1987.

\bibitem[Caron et~al.(2015{\natexlab{a}})Caron, Pham, and
  Nakamura]{caron2015stability}
S.~Caron, Q.-C. Pham, and Y.~Nakamura, ``Stability of surface contacts for
  humanoid robots: Closed-form formulae of the contact wrench cone for
  rectangular support areas,'' in \emph{IEEE International Conference on
  Robotics and Automation}, 2015, pp. 5107--5112.

\bibitem[Caron et~al.(2015{\natexlab{b}})Caron, Pham, and
  Nakamura]{caron2015leveraging}
------, ``Leveraging cone double description for multi-contact stability of
  humanoids with applications to statics and dynamics.'' in \emph{Robotics:
  Science and Systems}, 2015.

\bibitem[Koenig and Howard(2004)]{koenig2004design}
N.~Koenig and A.~Howard, ``Design and use paradigms for gazebo, an open-source
  multi-robot simulator,'' in \emph{IEEE/RSJ International Conference on
  Intelligent Robots and Systems}, vol.~3, 2004, pp. 2149--2154.

\bibitem[Quigley et~al.(2009)Quigley, Gerkey, Conley, Faust, Foote, Leibs,
  Berger, Wheeler, and Ng]{Quigley09}
M.~Quigley, B.~Gerkey, K.~Conley, J.~Faust, T.~Foote, J.~Leibs, E.~Berger,
  R.~Wheeler, and A.~Ng, ``{ROS}: an open-source robot operating system,'' in
  \emph{IEEE Intl. Conf. on Robotics and Automation Workshop on Open Source
  Robotics}, 2009.

\bibitem[Andersson et~al.(2019)Andersson, Gillis, Horn, Rawlings, and
  Diehl]{Andersson2018}
J.~A. Andersson, J.~Gillis, G.~Horn, J.~B. Rawlings, and M.~Diehl, ``{CasADi}:
  a software framework for nonlinear optimization and optimal control,''
  \emph{Mathematical Programming Computation}, vol.~11, no.~1, pp. 1--36, 2019.

\bibitem[Ferreau et~al.(2014)Ferreau, Kirches, Potschka, Bock, and
  Diehl]{Ferreau2014}
H.~Ferreau, C.~Kirches, A.~Potschka, H.~Bock, and M.~Diehl, ``{qpOASES}: A
  parametric active-set algorithm for quadratic programming,''
  \emph{Mathematical Programming Computation}, vol.~6, no.~4, pp. 327--363,
  2014.

\end{thebibliography}

\end{document}